\documentclass[letterpaper]{article}
\usepackage{aaai2027}
\usepackage[hyphens]{url}
\usepackage{graphicx}
\usepackage{natbib}
\usepackage{caption}
\usepackage{amsmath,amssymb,amsthm,mathtools}
\usepackage{booktabs}
\usepackage{multirow}
\usepackage{placeins}
\newtheorem{theorem}{Theorem}
\newtheorem{lemma}[theorem]{Lemma}
\newtheorem{corollary}[theorem]{Corollary}
\newtheorem{proposition}[theorem]{Proposition}
\newtheorem{assumption}{Assumption}
\newtheorem{definition}{Definition}

\newcommand{\Rcal}{\mathcal{R}}
\newcommand{\Scal}{\mathcal{S}}
\newcommand{\Acal}{\mathcal{A}}
\newcommand{\Mcal}{\mathcal{M}}
\newcommand{\Pcal}{\mathcal{P}}
\newcommand{\E}{\mathbb{E}}
\newcommand{\Prob}{\mathbb{P}}
\newcommand{\diam}{D}
\newcommand{\spn}{\operatorname{sp}}
\newcommand{\KL}{\operatorname{KL}}
\newcommand{\kl}{\operatorname{kl}}
\newcommand{\Var}{\operatorname{Var}}
\newcommand{\TV}{\operatorname{TV}}
\newcommand{\Hcomp}{H}

\title{Finite Constant Frontiers and Auditable Regret Certificates for Average-Reward Reinforcement Learning}
\author{
\textbf{Ibne Farabi Shihab}\textsuperscript{1}\thanks{Corresponding author: \texttt{ishihab@iastate.edu}}
\quad
\textbf{Abu Sa-Adat Mohamed Moon-Im Al Ahsan}\textsuperscript{2}
\\
\textbf{Md Najmus Swaqeeb}\textsuperscript{2}
\\[4pt]
\textsuperscript{1}Department of Computer Science, Iowa State University \\
\textsuperscript{2}Department of Computer Science \& Engineering, BRAC University \\
\texttt{ishihab@iastate.edu},
\texttt{abu.sa.adat.mohamed.moon.im.al.ahsan@g.bracu.ac.bd},\\
\texttt{md.najmus.swaqeeb@g.bracu.ac.bd}
}
\affiliations{}

\begin{document}
\maketitle

\begin{abstract}
Average-reward reinforcement-learning regret is known up to logarithmic factors, but the numerical
content of published guarantees is difficult to compare because probability mode, structural parameter,
logarithmic normalization, prior information, and planning assumptions differ. We introduce a
constant-aware comparison protocol and derive an explicit finite lower certificate for communicating
MDPs. The construction is a binary tree of two-state blocks; its proof uses exact trajectory-level
Bernoulli KL divergence and keeps action budget, diameter, occupancy, navigation cost, and terminal bias
explicit. A common closed-form envelope improves the published coefficient $0.015$ across a finite
frontier: $0.0200$ in a moderate regime and up to $0.0291$ under stronger action, diameter, and horizon
conditions, a $94\%$ increase. The limiting coefficient is
$\frac1{32}\sqrt{(A-3)/A}$. For upper bounds, we give an auditable composition rule for a
span-constrained optimistic learner, but do not claim a coefficient while adaptive directional-variance
and planning certificates remain open. We also formalize valid expectation conversion and constant
comparability. Controlled diagnostics test diameter dependence, bonus-by-width interactions, span
misspecification, and the finite lower certificate on its exact family.
\end{abstract}

\section{Introduction}
Average-reward reinforcement learning is the natural model for continuing tasks such as recommendation,
queueing control, inventory management, and portfolio rebalancing, where there is no episodic reset
\citep{mahadevan1996average,puterman1994markov,sutton2018reinforcement}. For a communicating MDP with
diameter $\diam$, $S$ states, $A$ actions, and horizon $T$, the classical minimax scale is
$\sqrt{\diam SAT}$ \citep{jaksch2010near}. Subsequent work developed bias-span regularization,
KL or empirical-Bernstein confidence regions, model-free methods, and tractable minimax-optimal planning
\citep{bartlett2009regal,fruit2018efficient,zhang2019regret,fruit2020improved,boone2024achieving}.
These papers sharpen rates and structural dependence, but a rate such as
$\widetilde O(\sqrt{\diam SAT})$ does not reveal whether the certified multiplier is $3$, $30$, or $300$.

A ``leading constant'' is meaningful only after normalization is fixed. A coefficient multiplying
$\sqrt{\log(SAT/\delta)}$ is not directly comparable with one multiplying $\log(SAT/\delta)$, and a
high-probability upper coefficient is not by itself a constant-factor match to a log-free expected lower
bound. This distinction is material: \citet{jaksch2010near} give an explicit lower coefficient $0.015$
under finite conditions, while their UCRL2 coefficient $34$ multiplies the different form
$\diam S\sqrt{AT\log(T/\delta)}$.

We make four contributions. First, we record probability mode, structural term, logarithm, horizon
threshold, and side information with every constant. Second, we derive an exact-KL lower certificate in
which testing, occupancy, navigation, and geometric terms remain explicit, and turn it into a uniform
finite frontier from $0.0152$ to $0.0291$. Third, we specify an auditable upper-bound ledger and identify
which statistical and planning obligations must be completed before a numerical coefficient is valid.
Fourth, we redesign the empirical checks around the corrected claims: zero-span diameter flatness,
bonus-by-width interactions, width misspecification, and direct evaluation on the proved hard family.
Full proofs and extended experiments are supplementary.

\section{Setup, Related Work, and Comparison Protocol}\label{sec:setup}
An average-reward MDP is $M=(\Scal,\Acal,p,\nu)$, where rewards lie in $[0,1]$. At time $t$, the
learner observes $s_t$, chooses $a_t$, receives $R_t\sim\nu(\cdot\mid s_t,a_t)$, and observes
$s_{t+1}\sim p(\cdot\mid s_t,a_t)$. The explicit reward model is retained because stochastic reward
uncertainty cannot in general be absorbed into a transition bonus. For a communicating MDP, the optimal
gain $\rho^\star$ is start-state independent and an optimal bias $h^\star$ satisfies
\begin{equation}
\rho^\star+h^\star(s)=\max_{a\in\Acal(s)}\left\{r(s,a)+\sum_{s'}p(s'\mid s,a)h^\star(s')\right\}.
\label{eq:bellman}
\end{equation}
The diameter is $\diam=\max_{s\ne s'}\min_\pi\E_s^\pi[\tau_{s'}]$, the span complexity is
$\Hcomp=1\vee\spn(h^\star)$, and regret is
$\Rcal_T=T\rho^\star-\sum_{t=1}^TR_t$. The floor at one is essential: a communicating MDP can have a
constant optimal bias while still containing a stochastic bandit learning problem with
$\Omega(\sqrt{SAT})$ regret. We study $\E[\Rcal_T]$; realized regret can be negative but is always at
most $T$, which matters when converting a tail guarantee to expectation.

The relevant average-reward literature includes UCRL2 and its lower construction
\citep{jaksch2010near}, REGAL/SCAL \citep{bartlett2009regal,fruit2018efficient}, EBF
\citep{zhang2019regret}, empirical-Bernstein refinements
\citep{talebi2018variance,bourel2020tightening,fruit2020improved}, and PMEVI-DT
\citep{boone2024achieving}. We use standard concentration and testing tools
\citep{boucheron2013concentration,lattimore2020bandit,tsybakov2009nonparametric}; the novelty lies in the
finite constant ledger and the explicit balance of statistical and geometric losses.

\begin{definition}[Constant certificate]
A constant is reported as
$\mathsf C=(\text{mode},\text{structural term},\text{log term},c,T_0,\text{side information})$.
Two coefficients are directly comparable only if all entries except $c$ agree.
\end{definition}
This rule blocks three common but invalid shortcuts: comparing expected and high-probability constants,
ignoring different powers of logarithmic factors, and treating a span-dependent guarantee that receives
$\bar H$ as input as directly comparable to a prior-free diameter result. Table~\ref{tab:scoreboard}
therefore reports unmatched results side by side without turning their coefficient ratio into a theorem.

\begin{table*}[t]
\centering
\small
\resizebox{\textwidth}{!}{%
\begin{tabular}{lccccc}
\toprule
Result & Mode & Structural/log form & Coefficient & Finite conditions & Side information\\
\midrule
UCRL2 upper \citep{jaksch2010near} & high probability & $\diam S\sqrt{AT\log(T/\delta)}$ & $34$ & published & $S,A$\\
UCRL2 lower \citep{jaksch2010near} & expectation & $\sqrt{\diam SAT}$ & $0.015$ & $S,A\ge10$; $\diam\ge20\log_A S$; $T\ge\diam SA$ & none\\
REGAL/SCAL \citep{bartlett2009regal,fruit2018efficient} & high probability & span-dependent & see source & result-specific & span upper bound\\
EBF \citep{zhang2019regret} & high probability & $\sqrt{\Hcomp SAT}$ & see source & result-specific & result-specific\\
PMEVI-DT \citep{boone2024achieving} & high probability & $\sqrt{\Hcomp SAT\log(SAT/\delta)}$ & not isolated & result-specific & no span prior\\
This work, lower & expectation & $\sqrt{\diam SAT}$ & $0.0152$--$0.0291$ & Table~\ref{tab:frontier} & none\\
This work, upper audit & high probability & $\sqrt{\bar H SATL_T}$ & not claimed & open certificates & $\bar H\ge\Hcomp$\\
\bottomrule
\end{tabular}}
\caption{Normalization-aware scoreboard. Unmatched rows are not assigned a coefficient ratio.}
\label{tab:scoreboard}
\end{table*}

\section{A Finite-Sample Lower-Bound Certificate}\label{sec:lower}

We now give the complete lower construction. Throughout this section, $S$ is even and $A\ge5$. Let
$K=S/2$ and index a complete binary tree $\mathcal T$ by $0,\ldots,K-1$: vertex $j>0$ has parent
$\lfloor(j-1)/2\rfloor$, and its children are $2j+1$ and $2j+2$ whenever those indices are below $K$.
Its exact graph diameter is
\begin{equation}
L:=\max_{u,v\in\mathcal T}d_{\mathcal T}(u,v)
\le 2\lceil\log_2K\rceil.
\label{eq:treediameter}
\end{equation}
Block $j$ contains a bad state $x_j$ with reward zero and a good state $y_j$ with reward one. At
$y_j$, every action returns to $x_j$ with probability $\delta$ and otherwise remains at $y_j$. At
$x_j$, the first
\begin{equation}
A_0:=A-3
\label{eq:statisticalactions}
\end{equation}
actions are statistical. A statistical action with parameter $p$ moves to $y_j$ with probability $p$
and otherwise remains at $x_j$. The remaining three actions navigate to the parent, left child, and
right child of $x_j$; a missing neighbor produces a self-loop. Thus every binary-tree node uses at most
three navigation actions, and the construction respects the action budget exactly.

The baseline model $M_0$ assigns $p=\delta$ to every statistical action. There are
\begin{equation}
m=K(A-3)=\frac{S(A-3)}2
\label{eq:dimensions}
\end{equation}
block--action coordinates. Alternative $M_i$, $i\in[m]$, changes only coordinate $i$ from $\delta$ to
$\delta+\varepsilon$, where $0<\varepsilon\le\delta$. All navigation transitions and all good-state
transitions are shared across the family.

For a diameter budget $\diam>L+4$, set
\begin{equation}
\delta=\frac{2}{\diam-L}.
\label{eq:deltachoice}
\end{equation}
From any good state, the expected time to its bad state is $1/\delta$; traversal between bad states
takes at most $L$ deterministic steps; and a baseline statistical action reaches any target good state
in expected time $1/\delta$. Consequently every alternative is communicating and
\begin{equation}
D(M_i)\le \frac2\delta+L=\diam.
\label{eq:compositediameter}
\end{equation}
Equality is neither assumed nor needed: the minimax class contains all communicating MDPs with diameter
at most $\diam$.

If a policy remains in one two-state block and uses transition probability $p$ at the bad state, its
gain is
\begin{equation}
\rho(p)=\frac{p}{p+\delta}.
\label{eq:blockgain}
\end{equation}
Under $M_i$, the optimal gain and the statistical Bellman gap are therefore
\begin{equation}
\rho_i=\frac{\delta+\varepsilon}{2\delta+\varepsilon},
\qquad
g(\delta,\varepsilon)=2\rho_i-1
=\frac{\varepsilon}{2\delta+\varepsilon}.
\label{eq:bellmangap}
\end{equation}
The corresponding gain improvement over a baseline block is
\begin{equation}
\rho(\delta+\varepsilon)-\rho(\delta)
=\frac{\varepsilon}{2(2\delta+\varepsilon)}.
\label{eq:gaingap}
\end{equation}
These expressions are exact.

Table~\ref{tab:lowerledger} records every geometric and statistical quantity used below. In particular,
the occupancy factor depends on $\varepsilon/\delta$; replacing it uniformly by $1/3$ would lower the
best Pinsker coefficient below $0.015$.

\begin{table}[t]
\centering
\small
\resizebox{\columnwidth}{!}{%
\begin{tabular}{lll}
\toprule
Construction quantity & Symbol & Certified value\\
\midrule
Number of blocks & $K$ & $S/2$\\
Statistical actions per block & $A_0$ & $A-3$\\
Exact tree diameter & $L$ & $\max_{u,v}d_{\mathcal T}(u,v)$\\
Return probability & $\delta$ & $2/(\diam-L)$\\
Family diameter & $D_{\rm fam}$ & $D(M_i)\le2/\delta+L=\diam$\\
Testing coordinates & $m$ & $S(A-3)/2$\\
Occupancy factor & $c_\varepsilon$ & $(2+\varepsilon/\delta)^{-1}$\\
Flow boundary & $B_{\rm flow}$ & $1/\delta$\\
Navigation boundary & $B_{\rm nav}$ & $L$\\
Bias-span boundary & $B_h$ & $\rho_iL+(1-\rho_i)/\delta$\\
\bottomrule
\end{tabular}}
\caption{Exact lower-bound construction ledger. No unspecified multiplicative constant or asymptotic
quantity enters the finite certificate.}
\label{tab:lowerledger}
\end{table}

The statistical calculation compares $M_0$ with each $M_i$. If $N_i(T)$ counts selections of
coordinate $i$, the adaptive chain rule gives
\begin{equation}
\KL(\Prob_0^T\,\|\,\Prob_i^T)
=\E_0[N_i(T)]\,
\kl\!\left(\delta,\delta+\varepsilon\right),
\label{eq:pathkl}
\end{equation}
where
\begin{equation}
\kl(\delta,\delta+\varepsilon)
=\delta\log\frac{\delta}{\delta+\varepsilon}
 +(1-\delta)\log\frac{1-\delta}{1-\delta-\varepsilon}.
\label{eq:exacttransitionkl}
\end{equation}
This identity remains exact under adaptive action selection.

\begin{lemma}[Adaptive multiple-hypothesis certificate]\label{lem:testing}
For the coordinate counts defined above,
\begin{equation}
\frac1m\sum_{i=1}^m \E_i[N_i(T)]
\le \frac{T}{m}
+T\sqrt{\frac{T}{2m}\kl(\delta,\delta+\varepsilon)}.
\label{eq:goodpullbound}
\end{equation}
\end{lemma}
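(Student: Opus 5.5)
The plan is the standard change-of-measure argument of \citet{jaksch2010near} and \citet{lattimore2020bandit}, with the exact path divergence \eqref{eq:pathkl} in place of a Gaussian or chi-square surrogate. Fix $i\in[m]$. The random variable $N_i(T)$ is a function of the trajectory and takes values in $[0,T]$. For any measurable $f$ with $0\le f\le T$, the variational characterization of total variation gives
\begin{equation*}
\E_i[f]-\E_0[f]\le T\,\TV(\Prob_0^T,\Prob_i^T).
\end{equation*}
We will apply this with $f=N_i(T)$. Pinsker's inequality then bounds the total variation by $\sqrt{\tfrac12\KL(\Prob_0^T\|\Prob_i^T)}$. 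Substituting \eqref{eq:pathkl}, which is exact under adaptive action selection and which we take from the chain-rule computation stated above, yields
\begin{equation*}
\E_i[N_i(T)]\le \E_0[N_i(T)]+T\sqrt{\tfrac12\,\E_0[N_i(T)]\,\kl(\delta,\delta+\varepsilon)}.
\end{equation*}

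Next we average over $i$. The coordinates index disjoint (block, statistical action) pairs, and each time step selects at most one such pair, so $\sum_{i=1}^m N_i(T)\le T$ pathwise. Hence $\sum_i\E_0[N_i(T)]\le T$, and the first term averages to at most $T/m$. For the square-root term, we use concavity of $x\mapsto\sqrt{x}$ (Jensen, or Cauchy--Schwarz):
\begin{equation*}
\frac1m\sum_i\sqrt{\E_0[N_i(T)]}\le\sqrt{\frac1m\sum_i\E_0[N_i(T)]}\le\sqrt{T/m}.
\end{equation*}
Multiplying by $T\sqrt{\kl/2}$ gives exactly $T\sqrt{\tfrac{T}{2m}\kl(\delta,\delta+\varepsilon)}$, which is \eqref{eq:goodpullbound}.

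The only step that needs care is the use of \eqref{eq:pathkl}. Its direction must be $\KL(\Prob_0^T\|\Prob_i^T)$, so that the count expectation is taken under the baseline $M_0$; this is what makes the averaging step above possible. We must also check that all other transitions and the reward laws are shared, so that only visits to coordinate $i$ contribute. Both facts hold by construction, since rewards depend only on the state and $M_i$ differs from $M_0$ in a single coordinate. If one wanted to avoid relying on the exact identity, it would suffice that the divergence is at most this value, because the bound is monotone in the KL. The remaining detail is that the bound $f\le T$ is used with $f\ge0$, which gives the one-sided factor $T$ rather than $2T$. There is no further obstacle; the lemma is a direct consequence of the chain rule together with Pinsker's inequality.
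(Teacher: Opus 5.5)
Your proposal is correct and follows essentially the same route as the paper's proof in Appendix~A.5. Both arguments use a total-variation bound for the $[0,T]$-valued count $N_i(T)$, then Pinsker's inequality with the adaptive chain rule \eqref{eq:pathkl} (expectation under $M_0$), and finally average over $i$ using $\sum_i\E_0[N_i(T)]\le T$ and concavity of the square root; your side remarks, that an upper bound in \eqref{eq:pathkl} would suffice and that $N_i(T)\ge0$ gives the factor $T$ rather than $2T$, are correct but not needed beyond what the paper does.
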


The proof uses Pinsker's inequality and the adaptive path divergence in \eqref{eq:pathkl}; it does not
assume that the coordinates are visited uniformly.

The composite construction also requires control of the time used for navigation. Let $N_x(T)$ count
statistical decisions in bad states, $N_y(T)$ count actions taken in good states, and $N_{\rm nav}(T)$
count navigation actions. Then $T=N_x+N_y+N_{\rm nav}$. Entry--exit accounting gives
\begin{equation}
\delta\E_i[N_y(T)]
\le(\delta+\varepsilon)\E_i[N_x(T)]+1,
\label{eq:flowidentity}
\end{equation}
and hence, with $c_\varepsilon=(2+\varepsilon/\delta)^{-1}$,
\begin{equation}
\E_i[N_x(T)]
\ge c_\varepsilon
\left(T-\E_i[N_{\rm nav}(T)]-\frac1\delta\right).
\label{eq:occupancy}
\end{equation}

Navigation cannot invalidate this occupancy bound by consuming the horizon for free. Under alternative
$i$, normalize the optimal bias at the distinguished bad state and write $j(i)$ for its block. An
optimal bias is
\begin{equation}
h_i(x_v)=-\rho_i d_{\mathcal T}(v,j(i)),
\qquad
h_i(y_v)=h_i(x_v)+\frac{1-\rho_i}{\delta}.
\label{eq:explicitbias}
\end{equation}
A navigation step toward $j(i)$ has Bellman gap zero, a step away has gap $2\rho_i$, and a navigation
self-loop has gap $\rho_i$. Along any tree path, the number of zero-gap steps toward $j(i)$ is at most
the number of steps away plus $L$. Therefore the total navigation Bellman gap is at least
\begin{equation}
\rho_i\bigl(N_{\rm nav}(T)-L\bigr)_+.
\label{eq:navigationgap}
\end{equation}
Moreover,
\begin{equation}
\spn(h_i)\le B_h
:=\rho_iL+\frac{1-\rho_i}{\delta}.
\label{eq:biasboundary}
\end{equation}

\begin{theorem}[Finite-sample explicit lower certificate]\label{thm:lower}
Let $S$ be even, $A\ge5$, $\diam>L+4$, and $0<\varepsilon\le\delta$, with the family defined above, and
let every alternative be started from the common initial state $s_1=x_0$ (the bad state of the tree
root; the bound holds verbatim for any fixed common $s_1$, changing only the boundary term through
$\spn(h_i)$). Every learning algorithm satisfies
\begin{align}
\sup_{M\in\mathfrak M}\E_M[\Rcal_T]
\ge{}&g(\delta,\varepsilon)\Bigg[
 c_\varepsilon\left(T-\frac1\delta-L\right)-\frac{T}{m}\notag\\
&\hspace{12mm}-T\sqrt{\frac{T}{2m}\kl(\delta,\delta+\varepsilon)}
\Bigg]-B_h.
\label{eq:lowercertificate}
\end{align}
Every quantity on the right-hand side is given explicitly in
Table~\ref{tab:lowerledger}.
\end{theorem}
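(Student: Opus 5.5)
We lower-bound the supremum by the uniform average over the alternatives $M_1,\dots,M_m$ and prove the bound for $\frac1m\sum_i \E_i[\Rcal_T]$. Each alternative is communicating with $D(M_i)\le\diam$ by \eqref{eq:compositediameter}, so the family lies in $\mathfrak M$.

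\textbf{Step 1: regret decomposition under a fixed alternative $M_i$.} We use the explicit bias $h_i$ of \eqref{eq:explicitbias}. Telescoping the Bellman equation \eqref{eq:bellman} along the trajectory gives
\[
\E_i[\Rcal_T]=\E_i\Bigl[\sum_{t=1}^T \Delta_i(s_t,a_t)\Bigr]+\E_i[h_i(s_{T+1})-h_i(s_1)],
\]
where $\Delta_i(s,a)\ge0$ is the Bellman gap. The reward noise has mean zero, and the next-state martingale term vanishes in expectation. The boundary term is bounded below by $-\spn(h_i)\ge -B_h$ via \eqref{eq:biasboundary}; this is the only place $s_1$ enters.

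We then classify the gaps. Good-state actions have gap $0$, since all such actions are identical. At a bad state, a statistical action with parameter $\delta$ is compared with the optimal action with parameter $\delta+\varepsilon$. Computing $\Delta_i$ from $h_i(y_v)-h_i(x_v)=(1-\rho_i)/\delta$ and $\rho_i$ gives
\[
\varepsilon(1-\rho_i)/\delta=\varepsilon/(2\delta+\varepsilon)=g(\delta,\varepsilon)
\]
plus any navigation-distance contribution. We will verify this exactly, and in particular check that bad-state statistical actions in blocks other than $j(i)$ also pay at least $g$. Navigation steps contribute at least $\rho_i(N_{\rm nav}-L)_+$ by \eqref{eq:navigationgap}. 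Since $\rho_i\ge g$ (because $2\rho_i-1\le\rho_i$), this yields
\[
\E_i[\Rcal_T]\ge g\,\E_i\bigl[N_x(T)-N_i(T)\bigr]+g\,\E_i\bigl[(N_{\rm nav}-L)_+\bigr]-B_h.
\]

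\textbf{Step 2: combine with occupancy.} By \eqref{eq:occupancy},
\[
\E_i[N_x]\ge c_\varepsilon\bigl(T-\E_i[N_{\rm nav}]-1/\delta\bigr).
\]
Because $c_\varepsilon\le 1/2<1$ and $(N_{\rm nav}-L)_+\ge N_{\rm nav}-L$, we get
\[
\E_i[N_x]+\E_i[(N_{\rm nav}-L)_+]\ge c_\varepsilon(T-1/\delta)-c_\varepsilon\E_i[N_{\rm nav}]+\E_i[N_{\rm nav}]-L\ge c_\varepsilon(T-1/\delta-L).
\]
The last inequality uses $(1-c_\varepsilon)\E_i[N_{\rm nav}]\ge0$ and $L\ge c_\varepsilon L$, though the crude $L$ must be handled carefully: when $N_{\rm nav}<L$ we use the $(\cdot)_+$ form and the bound $c_\varepsilon L\le L$ instead. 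Hence
\[
\E_i[\Rcal_T]\ge g\bigl[c_\varepsilon(T-1/\delta-L)-\E_i[N_i(T)]\bigr]-B_h.
\]

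\textbf{Step 3: averaging.} We average over $i$ and apply Lemma~\ref{lem:testing} to $\frac1m\sum_i\E_i[N_i(T)]$. The terms $c_\varepsilon(\cdot)$ and $B_h$ do not depend on $i$, since $\rho_i$ is the same for all $i$. This gives \eqref{eq:lowercertificate}.

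\textbf{Main obstacle.} The main difficulty is Step 1's exact gap accounting. We must confirm that \eqref{eq:explicitbias} actually solves \eqref{eq:bellman} for $M_i$, including the self-loop and boundary nodes of the tree. We must also confirm that every suboptimal bad-state statistical action, in every block, has gap at least $g$. The distance terms cancel because statistical actions stay within the block, so the gap there is exactly $(\rho_i-0)+\ldots$; we will first compute it directly as $\rho_i+h(x_v)-[(1-p)h(x_v)+p h(y_v)]$ to get $\rho_i-p(1-\rho_i)/\delta$, which equals $g$ for $p=\delta$. The secondary delicate point is merging the navigation gap with the occupancy deficit without losing the factor on $L$.
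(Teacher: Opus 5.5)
Your proposal is correct and follows essentially the same route as the paper's proof. Both use the performance-difference identity with the explicit bias $h_i$, the gap of exactly $g(\delta,\varepsilon)$ on every non-distinguished statistical action, the navigation charge \eqref{eq:navigationgap}, the occupancy bound \eqref{eq:occupancy}, and then averaging over $i$ with Lemma~\ref{lem:testing}.

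The one step to repair is Step~2, whose displayed justification runs the wrong way: $-L\le -c_\varepsilon L$, not $\ge$. Since \eqref{eq:occupancy} holds only in expectation, split on $q_i=\E_iN_{\rm nav}(T)$ rather than pathwise. Use $\E_i[(N_{\rm nav}-L)_+]\ge(q_i-L)_+\ge c_\varepsilon(q_i-L)$, which is valid because $0<c_\varepsilon\le1$. This is exactly the paper's Jensen step followed by minimization at $q_i=L$.
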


\paragraph{Proof roadmap.}
The proof has four auditable steps. First, the adaptive chain rule expresses the trajectory divergence
between $M_0$ and $M_i$ as the expected number of selections of coordinate $i$ times the exact
Bernoulli divergence in \eqref{eq:pathkl}; Pinsker and averaging then give Lemma~\ref{lem:testing}
without assuming uniform visitation. Second, the entry--exit identity \eqref{eq:flowidentity} converts
the horizon into a lower bound on statistical decisions in bad states. Third, navigation is charged
inside the original MDP: moves toward the distinguished block can be free only for at most $L$ more
steps than moves away, yielding \eqref{eq:navigationgap}. Finally, the average-reward performance-
difference identity charges every non-distinguished statistical decision by $g(\delta,\varepsilon)$
and loses at most $B_h$ through the terminal bias. Minimizing the resulting expression over expected
navigation time and applying Lemma~\ref{lem:testing} gives \eqref{eq:lowercertificate}. Complete
algebra and the performance-difference derivation are in Supplementary Appendix~A.

Because \eqref{eq:compositediameter} gives $D(M_i)\le\diam$ for every $M_i\in\mathfrak M$ (with equality
neither assumed nor needed), the certificate is a minimax statement over the \emph{diameter-bounded}
class
\begin{equation}
\begin{aligned}
\mathcal M_{\le\diam}(S,A):=\{M:\;&|\Scal|=S,\ \max_s|\Acal(s)|\le A,\\
&M\text{ communicating and }D(M)\le\diam\}.
\end{aligned}
\label{eq:diamclass}
\end{equation}
namely
\[
\begin{aligned}
\inf_{\mathcal A}\sup_{M\in\mathcal M_{\le\diam}(S,A)}\E_M[\Rcal_T]
&\ge\inf_{\mathcal A}\sup_{M\in\mathfrak M}\E_M[\Rcal_T]\\
&\ge\text{RHS of \eqref{eq:lowercertificate}}.
\end{aligned}
\]
where the infimum is over all learning algorithms $\mathcal A$ and the first inequality holds because
$\mathfrak M\subseteq\mathcal M_{\le\diam}(S,A)$ (a supremum over the larger class dominates), while the
second is Theorem~\ref{thm:lower} applied to the worst algorithm.  We state the bound over $\mathcal M_{\le\diam}$ rather
than at an exact diameter to avoid the slight slippage against the exact-diameter phrasing of
\citet{jaksch2010near}: our $\diam$ is an upper bound on the family diameter, so the certificate is a
valid lower bound for the diameter-$\le\diam$ minimax regret.

For numerical evaluation, set $\varepsilon=\eta\sqrt{m/(\diam T)}$ and define
\begin{equation}
c_{\rm LB}(S,A,\diam,T)
:=\max_{\eta:\,0<\varepsilon\le\delta}
\frac{\text{right-hand side of \eqref{eq:lowercertificate}}}
{\sqrt{\diam S A T}}.
\label{eq:clbfinite}
\end{equation}
This is a deterministic one-dimensional optimization of the exact certificate, not an empirical
estimate.

To turn the exact certificate into uniform finite statements, define
\begin{equation}
\mathcal G(S_\circ,A_\circ,d_\circ,C_\circ):=\left\{
\begin{aligned}
&S\ge S_\circ\text{ even},\quad A\ge A_\circ,\\
&\diam\ge d_\circ(L+1),\quad T\ge C_\circ\diam SA
\end{aligned}\right\}.
\label{eq:finiteregime}
\end{equation}
Here $L$ is always the exact diameter of the tree belonging to the current $S$. Let $L_\circ$ be the
same quantity at $S=S_\circ$, set $\eta=2/3$, and define
\begin{align}
r_\circ&=\frac{A_\circ-3}{2A_\circ}, &
m_\circ&=\frac{S_\circ(A_\circ-3)}2,\notag\\
n_\circ&=S_\circ A_\circ, &
u_\circ&=\frac{\eta}{2\sqrt{2C_\circ}},\notag\\
\delta_\circ&=\frac{2}{(d_\circ-1)L_\circ+d_\circ}, &
q_\circ&=\delta_\circ(1+u_\circ),\notag\\
c_\circ&=\frac1{2+u_\circ}, &
v_\circ&=\frac{\eta}{\sqrt{8(1-q_\circ)}},\notag\\
\alpha_\circ&=1-\frac1{d_\circ}.
\label{eq:frontieraux}
\end{align}
Also let
\begin{equation}
w_\circ=1-\frac1{2C_\circ n_\circ}-\frac1{C_\circ d_\circ n_\circ},
\qquad
\beta_\circ=c_\circ w_\circ-\frac1{m_\circ}-v_\circ.
\label{eq:frontierbracket}
\end{equation}
The associated analytic envelope is
\begin{align}
\underline c(S_\circ,A_\circ,d_\circ,C_\circ)
:={}&\frac{\eta\sqrt{r_\circ}\alpha_\circ}{2(2+u_\circ)}
\beta_\circ\notag\\
&-\frac{\frac12+\frac1{2d_\circ}}{\sqrt{C_\circ}\,n_\circ}.
\label{eq:frontierenvelope}
\end{align}
The passage from Theorem~\ref{thm:lower} to this envelope is monotone rather than asymptotic: the
regime lower-bounds the action fraction $m/(SA)$ and geometric factor $(\diam-L)/\diam$, upper-bounds
the normalized perturbation and exact-KL testing loss, and controls the flow, navigation, and terminal-
bias boundary terms uniformly. Thus each row of Table~\ref{tab:frontier} certifies an entire infinite
parameter region, not only the displayed corner point.

\begin{theorem}[Uniform finite constant envelope]\label{thm:frontier}
If $u_\circ\le1$, $q_\circ<1$, $w_\circ\ge0$, and $\beta_\circ\ge0$, then every point in
$\mathcal G(S_\circ,A_\circ,d_\circ,C_\circ)$ satisfies
\begin{equation}
\sup_{M\in\mathfrak M}\E_M[\Rcal_T]
\ge
\underline c(S_\circ,A_\circ,d_\circ,C_\circ)
\sqrt{\diam S A T}.
\label{eq:frontierbound}
\end{equation}
The coefficient in \eqref{eq:frontierenvelope} is a symbolic lower envelope of
Theorem~\ref{thm:lower}; it does not rely on a numerical optimizer or an asymptotic KL expansion.
\end{theorem}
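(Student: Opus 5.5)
We plan to obtain Theorem~\ref{thm:frontier} by evaluating the right-hand side of \eqref{eq:lowercertificate} at the fixed choice $\varepsilon=\eta\sqrt{m/(\diam T)}$ with $\eta=2/3$. We then lower-bound each factor monotonically over $\mathcal G(S_\circ,A_\circ,d_\circ,C_\circ)$ and divide by $\sqrt{\diam SAT}$. No optimization in $\eta$ is needed, because $c_{\rm LB}$ is a maximum and any admissible $\eta$ gives a valid lower bound.

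\textbf{Step 1: admissibility of $\varepsilon$.} We first check that $\varepsilon\le\delta$. Using $T\ge C_\circ\diam SA$ and $m\le SA/2$ gives $\varepsilon/\delta\le \eta\sqrt{m/(C_\circ SA)}\,\diam/(2\diam)\cdot$ (after inserting $\delta=2/(\diam-L)\ge 2/\diam$). This yields $\varepsilon/\delta\le u_\circ\le1$. Consequently $c_\varepsilon\ge 1/(2+u_\circ)=c_\circ$ and $g=\varepsilon/(2\delta+\varepsilon)\ge\varepsilon/(\delta(2+u_\circ))$.

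For the geometric factors, $\diam\ge d_\circ(L+1)$ gives $(\diam-L)/\diam\ge 1-1/d_\circ=\alpha_\circ$ (up to the $+1$ slack). It also gives $\delta\le\delta_\circ$. For this we need $\diam-L\ge (d_\circ-1)L+d_\circ$, together with the fact that larger $S$ only increases $L$ while the regime condition scales with $L$; we must verify $\delta\le\delta_\circ$ directly rather than through $L\ge L_\circ$. Finally, $m/(SA)=(A-3)/(2A)\ge r_\circ$ by monotonicity in $A$. Combining these, $g\sqrt{\cdot}$ contributes $\varepsilon/(2\delta+\varepsilon)\ge \eta\sqrt{m/(\diam T)}(\diam-L)/(2(2+u_\circ))$. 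After division by $\sqrt{\diam SAT}$, the outer factor is at least $\eta\sqrt{r_\circ}\alpha_\circ/(2(2+u_\circ))$ times $T^{-1}$ times the bracket.

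\textbf{Step 2: the bracket divided by $T$.} For the boundary terms, $(1/\delta+L)/T\le(\diam/2+\diam/d_\circ)/(C_\circ\diam SA)$, which is at most $1/(2C_\circ n_\circ)+1/(C_\circ d_\circ n_\circ)$ since $SA\ge n_\circ$; this produces $w_\circ$. The term $1/m\le1/m_\circ$ is immediate.

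The main obstacle is the KL term, which must be bounded without an asymptotic expansion. We use $\kl(p,q)\le (q-p)^2/(q(1-q))$, the $\chi^2$ bound, with $q=\delta+\varepsilon\ge\delta$. This gives
\[
\frac{T}{2m}\kl\le\frac{T\varepsilon^2}{2m\delta(1-q)}=\frac{\eta^2}{2\diam\delta(1-q)}.
\]
Here $\diam\delta\ge2$ and $q\le\delta(1+\varepsilon/\delta)\le q_\circ<1$, so the square root is at most $\eta/\sqrt{4(1-q_\circ)}$. We expect the precise constant $\sqrt8$ to follow by retaining $\diam\delta=2\diam/(\diam-L)\ge2$ together with the factor $1/2$ from Pinsker. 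We will recheck whether the $\chi^2$ bound or the sharper bound $\kl\le\varepsilon^2/(2q(1-q))$-type via reverse Pinsker is needed to match $v_\circ$. With this in hand, the bracket over $T$ is at least $\beta_\circ\ge0$, and nonnegativity lets us multiply the lower bounds of the factors.

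\textbf{Step 3: the terminal bias.} We use $B_h\le L+1/\delta\le \diam/d_\circ+\diam/2$. Dividing by $\sqrt{\diam SAT}\ge\sqrt{C_\circ}\,\diam SA$ gives at most $(\tfrac12+\tfrac1{2d_\circ})/(\sqrt{C_\circ}n_\circ)$, possibly after a sharper bound on $L$. Subtracting this from the Step 2 product yields $\underline c$. The hypotheses $u_\circ\le1$, $q_\circ<1$, $w_\circ\ge0$, and $\beta_\circ\ge0$ are exactly what make each monotone replacement sign-safe.
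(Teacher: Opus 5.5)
\paragraph{Verdict.} Your skeleton matches the paper's proof in Appendix A.8: fix $\eta=2/3$, bound $u=\varepsilon/\delta\le u_\circ$, $\delta\le\delta_\circ$, $r\ge r_\circ$ and $\alpha\ge\alpha_\circ$, normalize the bracket of \eqref{eq:lowercertificate} by $T$, multiply the two nonnegative lower bounds, and subtract the normalized $B_h$. There is, however, a genuine gap in the testing term, and it is not cosmetic.

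\paragraph{The testing term.} The $\chi^2$ bound $\kl(p,q)\le(q-p)^2/(q(1-q))$ is off by a factor $2$ from the second-order behavior of $\kl$. After $q\ge\delta$ and $\diam\delta\ge2$ it only gives $\sqrt{\tfrac{T}{2m}\kl(\delta,\delta+\varepsilon)}\le\eta/\sqrt{4(1-q_\circ)}=\sqrt2\,v_\circ$. Neither of your proposed repairs closes this:
\begin{itemize}
\item The factor $\tfrac12$ from Pinsker is already the $2$ in $T/(2m)$, and you have already used it.
\item The quantity $\varepsilon^2/(2q(1-q))$ with $q=\delta+\varepsilon\le\tfrac12$ is a \emph{lower} bound on $\kl(\delta,q)$, not an upper bound, because $t(1-t)$ is increasing on $[\delta,q]$.
\end{itemize}
What you need is the exact integral form obtained from $\partial_\varepsilon\kl(\delta,\delta+\varepsilon)=\varepsilon/((\delta+\varepsilon)(1-\delta-\varepsilon))$:
\[
\kl(\delta,\delta+\varepsilon)=\int_0^\varepsilon\frac{z\,dz}{(\delta+z)(1-\delta-z)}\le\frac{\varepsilon^2}{2\delta(1-\delta-\varepsilon)} .
\]
Here the $\tfrac12$ comes from $\int_0^\varepsilon z\,dz$, and each factor of the denominator is bounded below by its minimum over $[0,\varepsilon]$. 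This gives $\sqrt{\tfrac{T}{2m}\kl}\le\eta/\sqrt{4\diam\delta(1-\delta-\varepsilon)}\le\eta/\sqrt{8(1-q_\circ)}=v_\circ$.

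With your $\sqrt2\,v_\circ$ instead, three things go wrong:
\begin{itemize}
\item The bracket is only $c_\circ w_\circ-1/m_\circ-\sqrt2\,v_\circ<\beta_\circ$.
\item Its nonnegativity is no longer implied by the hypothesis $\beta_\circ\ge0$.
\item The envelope you would actually prove is far below $\underline c$. For $\mathcal G(40,10,8,25)$ it is roughly $0.012$ instead of $0.0200$, which is below the $0.015$ benchmark the theorem is meant to beat.
\end{itemize}

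\paragraph{The terminal-bias step.} A smaller slip of the same kind occurs in Step 3. Bounding $1/\delta\le\diam/2$ and $L\le\diam/d_\circ$ separately gives $B_h/\sqrt{\diam SAT}\le(\tfrac12+\tfrac1{d_\circ})/(\sqrt{C_\circ}\,n_\circ)$, not the $\tfrac1{2d_\circ}$ appearing in \eqref{eq:frontierenvelope}. No sharper bound on $L$ is needed. Use $1/\delta=(\diam-L)/2$ exactly, so that $L+1/\delta=(\diam+L)/2\le\diam(\tfrac12+\tfrac1{2d_\circ})$. In Step 2 your separate bounds are fine, because $w_\circ$ carries $1/(C_\circ d_\circ n_\circ)$ rather than half of it.

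\paragraph{The bound $\delta\le\delta_\circ$.} This does go through $L\ge L_\circ$, contrary to your hesitation. The heap tree at $S_\circ$ is a connected prefix of the one at $S$, so $\diam-L\ge(d_\circ-1)L+d_\circ\ge(d_\circ-1)L_\circ+d_\circ$, which is exactly the paper's argument.
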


\paragraph{Uniform-envelope proof intuition.}
Fixing $\eta=2/3$ leaves four quantities to control uniformly over a regime. First, the action fraction
$r=m/(SA)$ and geometric factor $\alpha=(\diam-L)/\diam$ are lower-bounded by their corner values.
Second, the normalized perturbation $u=\varepsilon/\delta$ and the transition probability
$\delta(1+u)$ are upper-bounded, which makes the finite Bernoulli-KL inequality valid throughout the
regime. Third, $T\ge C_\circ\diam SA$ controls the flow and navigation boundaries and bounds the testing
term after normalization by $T$. Finally, the terminal-bias loss is bounded using
$B_h\le(\diam+L)/2$ and $\diam\ge d_\circ(L+1)$. Multiplying the nonnegative lower bounds for the
regret scale and the surviving occupancy bracket, then subtracting the normalized bias boundary, gives
\eqref{eq:frontierenvelope}. This is why the theorem needs the explicit checks
$u_\circ\le1$, $q_\circ<1$, $w_\circ\ge0$, and $\beta_\circ\ge0$: each corresponds to a finite
validity or nonnegativity condition, not a numerical tuning heuristic.

\begin{table*}[t]
\centering
\small
\begin{tabular}{cccccccc}
\toprule
$S_\circ$ & $A_\circ$ & $L_\circ$ & $d_\circ$ & $C_\circ$ & Analytic envelope value & Reported coefficient & Improvement over $0.015$\\
\midrule
$24$ & $5$  & $6$ & $64$ & $100$ & $0.0152576$ & $0.0152$ & $1.3\%$\\
$40$ & $5$  & $7$ & $16$ & $50$  & $0.0153662$ & $0.0153$ & $2.0\%$\\
$24$ & $8$  & $6$ & $8$  & $25$  & $0.0178088$ & $0.0177$ & $18.0\%$\\
$40$ & $10$ & $7$ & $8$  & $25$  & $0.0200457$ & $0.0200$ & $33.3\%$\\
$24$ & $50$ & $6$ & $8$  & $25$  & $0.0239251$ & $0.0239$ & $59.3\%$\\
$100$ & $20$ & $10$ & $16$ & $50$ & $0.0253553$ & $0.0253$ & $68.7\%$\\
$100$ & $100$ & $10$ & $64$ & $100$ & $0.0291205$ & $0.0291$ & $94.0\%$\\
\bottomrule
\end{tabular}
\caption{Certified finite constant frontier. A row applies to every
$(S,A,\diam,T)\in\mathcal G(S_\circ,A_\circ,d_\circ,C_\circ)$. ``Analytic envelope value'' is the
conservative symbolic envelope \eqref{eq:frontierenvelope} evaluated in closed form before rounding---an
exact evaluation of a lower-bounding envelope, not the per-instance optimized certificate
\eqref{eq:clbfinite}, which is at least as large; the reported coefficients are rounded down. The
moderate regime $\mathcal G(40,10,8,25)$ gives $0.0200$ (a $33.3\%$ improvement) and is our headline
finite guarantee; the action-rich rows extend the frontier up to $0.0291$, and the $94\%$ figure is the
endpoint of that frontier at the stringent $A\ge100$ regime.}
\label{tab:frontier}
\end{table*}

\begin{corollary}[Finite improvement over the published coefficient]\label{cor:lowerimprove}
Every row of Table~\ref{tab:frontier} gives a coefficient strictly larger than the explicit $0.015$
coefficient of \citet{jaksch2010near}. This is a pointwise stronger numerical lower bound on the
intersection of the two stated finite regimes, not a claim that their horizon conditions are identical.
The first and third rows additionally extend the new construction to action counts below the earlier
requirement $A\ge10$. In particular,
\begin{equation}
\begin{aligned}
(S,A,\diam,T)&\in\mathcal G(40,10,8,25)\\
&\Longrightarrow\quad
\sup_{M\in\mathfrak M}\E_M[\Rcal_T]\ge0.0200\sqrt{\diam SAT}.
\end{aligned}
\label{eq:improvedconstant}
\end{equation}
and the action-rich row $\mathcal G(24,50,8,25)$ certifies $0.0239$.
The most stringent displayed regime, $\mathcal G(100,100,64,100)$, certifies $0.0291$.
\end{corollary}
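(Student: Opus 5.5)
The corollary follows from Theorem~\ref{thm:frontier}, applied row by row to Table~\ref{tab:frontier}. For each row $(S_\circ,A_\circ,d_\circ,C_\circ)$ I first compute $L_\circ$ exactly from the tree index rule. At $K=S_\circ/2$ this gives $K=12,20,50$, with $L_\circ=6,7,10$ respectively, which matches the table. I then evaluate the auxiliary quantities of \eqref{eq:frontieraux}--\eqref{eq:frontierbracket} with $\eta=2/3$ and verify the four hypotheses $u_\circ\le1$, $q_\circ<1$, $w_\circ\ge0$, $\beta_\circ\ge0$. The first three are immediate. For instance, $u_\circ=\eta/(2\sqrt{2C_\circ})\le 1/(3\sqrt{50})<1$, and $\delta_\circ\le 2/(7\cdot6+8)$ forces $q_\circ<1$. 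For $w_\circ$, the subtracted terms are at most of order $1/(2\cdot25\cdot120)$.

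The nonnegativity of $\beta_\circ=c_\circ w_\circ-1/m_\circ-v_\circ$ needs an actual numerical check. Here $c_\circ\approx 1/2$, $v_\circ\approx \eta/\sqrt8\approx0.236$, and $1/m_\circ\le 1/24$ in the worst row ($S_\circ=24$, $A_\circ=5$). So $\beta_\circ\gtrsim 0.5-0.042-0.24>0$, and I will bound each term crudely by rational numbers to keep the check rigorous.

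With the hypotheses verified, Theorem~\ref{thm:frontier} gives $\sup_{M\in\mathfrak M}\E_M[\Rcal_T]\ge\underline c\sqrt{\diam SAT}$ on all of $\mathcal G(S_\circ,A_\circ,d_\circ,C_\circ)$. The remaining step is to show $\underline c$ is at least the reported rounded value for each row, and in particular strictly above $0.015$. I will evaluate \eqref{eq:frontierenvelope} in closed form. To avoid relying on floating-point evaluation, I bound each factor monotonically: $\sqrt{r_\circ}$ from below by a rational, $\alpha_\circ$ exactly, $(2+u_\circ)^{-1}$ and $\beta_\circ$ from below using upper bounds on $\sqrt{2C_\circ}^{-1}$ and $\sqrt{1-q_\circ}^{-1}$, and the subtracted term $(\tfrac12+\tfrac1{2d_\circ})/(\sqrt{C_\circ}n_\circ)$ from above. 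The resulting lower bound should clear $0.0152$, $0.0153$, $0.0177$, $0.0200$, $0.0239$, $0.0253$, $0.0291$ respectively.

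The main obstacle is the tight rows, $\mathcal G(24,5,64,100)$ and $\mathcal G(40,5,16,50)$. There the margin over $0.015$ is only about $1$--$2\%$, so the rational bounds must be precise to roughly four significant digits. I would first try exact symbolic evaluation of the square roots and fall back to interval bounds only where needed. The displayed consequences \eqref{eq:improvedconstant}, the $0.0239$ row and the $0.0291$ row are then specializations.

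The interpretive claims need no further proof. Comparison with \citet{jaksch2010near} is pointwise on the intersection of the regimes, since both are lower bounds on the same minimax quantity over $\mathcal M_{\le\diam}(S,A)$. The first and third rows have $A_\circ=5,8<10$, and the construction only requires $A\ge5$.
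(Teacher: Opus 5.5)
Your proposal is correct and follows the paper's own argument: apply Theorem~\ref{thm:frontier} row by row with the exact heap diameters $L_\circ=6,7,6,7,6,10,10$, check the four hypotheses, and evaluate \eqref{eq:frontierenvelope} in closed form before rounding down; your rational or interval bounds are a more careful version of the paper's direct substitution. One small calibration: the tightest margins are against the \emph{reported} rounded values rather than against $0.015$ (e.g.\ the last row gives $0.0291205$ versus $0.0291$, a relative margin of about $0.07\%$), so those rows, not the first two, set the precision your bounds must reach.
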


The improvement is not obtained from an asymptotic KL expansion. It follows from the exact finite
certificate and the common analytic envelope in \eqref{eq:frontierenvelope}; the different rows only
trade population, action, diameter, and horizon thresholds.

\paragraph{How to read the finite frontier.}
The rows of Table~\ref{tab:frontier} answer different finite-sample questions and should not be collapsed
into a single coefficient detached from its regime. The first row shows that the construction already
beats $0.015$ when only $A\ge5$ is assumed, but it pays for that breadth through the conservative
requirements $\diam\ge64(L+1)$ and $T\ge100\diam SA$. The headline row
$\mathcal G(40,10,8,25)$ gives the cleaner $0.0200$ certificate under moderate action, diameter, and
horizon thresholds. The $0.0291$ endpoint is stronger numerically but applies only in the action-rich
regime $S\ge100$, $A\ge100$, $\diam\ge64(L+1)$, and $T\ge100\diam SA$. Thus the frontier is a menu of
proved trade-offs, not a claim that one row uniformly dominates every earlier result. Comparison with
the published $0.015$ certificate is pointwise on the intersection of the two finite regimes. Moreover,
the displayed values are rounded-down evaluations of the common analytic envelope; the per-instance
one-dimensional optimization in \eqref{eq:clbfinite} can only be larger, but is not needed for the
uniform theorem.

\paragraph{Why exact finite accounting changes the coefficient.}
The coefficient is not determined by the testing inequality alone. The exact KL term controls how often
the distinguished coordinate can be identified, but the learner may also spend time in good states or
navigate among blocks. Equation~\eqref{eq:occupancy} converts total time into bad-state statistical
decisions after charging the flow boundary $1/\delta$; Equation~\eqref{eq:navigationgap} prevents
navigation from consuming the horizon without regret; and $B_h$ pays for the terminal bias in the
average-reward performance-difference identity. Dropping any one of these terms can improve a symbolic
coefficient while invalidating the finite certificate. Conversely, replacing the exact occupancy factor
$c_\varepsilon=(2+\varepsilon/\delta)^{-1}$ by a uniform $1/3$ loses enough mass that the best Pinsker
coefficient falls below the published $0.015$ benchmark. The frontier therefore comes from balancing
information, occupancy, geometry, and boundary losses under one common parameter choice, rather than
from inserting an asymptotic Bernoulli expansion into a bandit lower bound.

For context, if $S\to\infty$, $L/\diam\to0$, and $T/(\diam SA)\to\infty$ with fixed $A\ge5$, optimizing
the limiting Pinsker expression gives
\begin{equation}
c_{\rm LB}^{\infty}(A)
=\frac1{32}\sqrt{\frac{A-3}{A}},
\label{eq:asymptoticconstant}
\end{equation}
which approaches $1/32$ as $A\to\infty$. Equation~\eqref{eq:asymptoticconstant} is reported only as an
interpretive limit; the advertised results are the finite statements in Table~\ref{tab:frontier}.

\section{Auditable Upper Certificates and Comparability}\label{sec:upper}
The lower section is a proved result. The upper contribution is deliberately different: it is a ledger
for a possible span-dependent proof, not a new regret theorem. Consider an optimistic learner supplied
with $\bar H\ge\Hcomp$, count-doubling episodes, empirical-Bernstein reward intervals, full-simplex
transition uncertainty, and a span-constrained planning oracle. The deterministic reward-radius sum is
provable, but three leading obligations remain: confidence uniform over every data-dependent bias used by
the planner, an adaptive directional-variance budget, and martingale/boundary/planning control with
square-root rather than range-linear dependence on $\bar H$. Supplementary Appendix B gives the complete
algorithmic interface and ledger.

\paragraph{Candidate interface.}
At episode start $t_k$, let $N_k(s,a)$ denote the pre-episode count and set
\begin{align}
L_k&=\log\!\left(\frac{c_LSA(1+t_k)^2}{\delta}\right),\notag\\
L_T&=\log\!\left(\frac{c_LSA(1+T)^2}{\delta}\right).
\label{eq:peelinglog}
\end{align}
where $c_L\ge1$ is a fixed numerical peeling constant.
For fewer than two observations, the reward interval is $[0,1]$ and the transition set is the full
simplex. Thereafter the reward radius is \eqref{eq:rewardbonus}. Transition uncertainty must control the
direction selected by the data-dependent optimistic bias, for example through a radius of the form
\begin{equation}
\beta^p_k(s,a;h)=
\sqrt{\frac{2\Var_{\widehat p_k}(h)L_k}{N_k(s,a)}}
+\frac{7\bar H L_k}{3(N_k(s,a)-1)},
\label{eq:directionalbonus}
\end{equation}
with $\spn(h)\le\bar H$. A pointwise inequality for one fixed $h$ is insufficient: the event must be
uniform over every bias vector reachable by the planner. The planning subroutine must return a feasible
extended-MDP policy and bias with span at most $\bar H$, preserve optimism up to a reported residual,
and terminate in polynomial time. Episodes end when a within-episode state--action count reaches its
pre-episode count, giving the doubling sums used in Lemma~\ref{lem:rewardbudget}. This interface makes
the statistical and computational promises visible before any coefficient is assembled.

\paragraph{Why the open entries matter.}
Replacing the adaptive variance in \eqref{eq:directionalbonus} by the range bound $\bar H^2$ yields a
range-linear $\bar H\sqrt{SAT}$ term rather than the desired $\sqrt{\bar H SAT}$. Restricting the
transition set to empirically observed successors can break coverage, while coordinatewise clipping of
ordinary value iteration need not preserve optimism or convergence. Thus the missing entries are not
cosmetic constants: they determine whether the candidate procedure has the claimed structural rate and
whether it is a valid polynomial-time algorithm.

\begin{assumption}[Uniform confidence and optimism]\label{lem:optimism}
The confidence event is uniform over all data-dependent biases used by the planner and preserves optimism
up to the reported planning error.
\end{assumption}

The proved reward entry uses the empirical-Bernstein radius
\begin{equation}
\beta^r_k(s,a)=\sqrt{\frac{2\widehat v^r_k(s,a)L_k}{N_k(s,a)}}
+\frac{7L_k}{3(N_k(s,a)-1)}.
\label{eq:rewardbonus}
\end{equation}
with full intervals for fewer than two observations and count-doubling episodes.
\begin{lemma}[Reward-radius budget]\label{lem:rewardbudget}
For $T\ge2$,
\begin{equation}
\sum_{t=1}^T\beta^r_{k(t)}(s_t,a_t)
\le\frac{2+\sqrt2}{\sqrt2}\sqrt{SATL_T}+\frac{20}{3}SA L_T^2.
\label{eq:rewardbudget}
\end{equation}
\end{lemma}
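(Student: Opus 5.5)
The bound is a deterministic sum over time steps, so I would regroup it by state--action pair and episode and use the doubling structure to control each pair's sum. Write $n_k(s,a)$ for the number of visits to $(s,a)$ during episode $k$, so that
\[
\sum_{t=1}^T\beta^r_{k(t)}(s_t,a_t)=\sum_{s,a}\sum_k n_k(s,a)\,\beta^r_k(s,a).
\]
Three facts drive the argument. First, $L_k\le L_T$ because $t_k\le T$. Second, $\widehat v^r_k(s,a)\le 1/4$ because rewards lie in $[0,1]$. Third, the episode-termination rule gives $n_k(s,a)\le\max\{1,N_k(s,a)\}$, and the updated count is $N_{k+1}=N_k+n_k$.

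\textbf{Step 1: pre-Bernstein phase.} While $N_k(s,a)<2$ the radius is the full interval, so each step costs at most $1$. The doubling rule means only $O(1)$ such steps occur per pair, say at most $2$. This contributes at most $2SA$ in total, which is absorbed into the $\tfrac{20}{3}SAL_T^2$ term because $L_T\ge\log 9>1$ when $T\ge2$ and $c_L\ge1$.

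\textbf{Step 2: variance part.} Bound $\sqrt{2\widehat v L_k/N_k}\le\sqrt{L_T/(2N_k)}$. Then apply the standard doubling lemma
\[
\sum_k \frac{n_k}{\sqrt{\max(1,N_k)}}\le(\sqrt2+1)\sqrt{N_T(s,a)}.
\]
Summing over pairs and applying Cauchy--Schwarz with $\sum_{s,a}N_T(s,a)\le T$ gives
\[
\sqrt{L_T/2}\,(\sqrt2+1)\sqrt{SAT}=\frac{2+\sqrt2}{2}\sqrt{SATL_T}.
\]
This is already at most the stated $\frac{2+\sqrt2}{\sqrt2}\sqrt{SATL_T}$. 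The slack factor $\sqrt2$ can be spent on using $\widehat v\le1$ instead of $1/4$, or on the off-by-one between $N_k$ and the count at which the bonus was computed. So I would take the cruder route $\widehat v\le 1$, which gives $\sqrt{2L_T}(\sqrt2+1)\sqrt{SAT}=(2+\sqrt2)\sqrt{SATL_T}$. That exceeds the stated constant by a factor of $\sqrt2$, so the cruder route is too loose; I would use $\widehat v\le1/4$ and keep the margin.

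\textbf{Step 3: lower-order part.} Bound $\frac{7L_k}{3(N_k-1)}\le\frac{7L_T}{3(N_k-1)}$ for $N_k\ge2$. Using $n_k\le N_k$ and $N_k/(N_k-1)\le2$, each episode contributes at most $\tfrac{14}{3}L_T$. Under doubling the number of episodes per pair is at most $1+\log_2 N_T(s,a)$. Converting to natural logarithms, $\log_2 T\le 2\log T\le L_T$, so the per-pair total is at most $\tfrac{14}{3}L_T(1+L_T)$. Adding the Step 1 constant and using $L_T\ge1$ bounds this by $\tfrac{20}{3}L_T^2$ per pair, and summing over pairs gives $\tfrac{20}{3}SAL_T^2$.

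\textbf{Main obstacle.} The hard step is the constant bookkeeping in Step 3: making $\tfrac{14}{3}(L_T+L_T^2)+2$ fit under $\tfrac{20}{3}L_T^2$. The crude episode count is too loose, since $\tfrac{14}{3}(L_T+L_T^2)+2\le\tfrac{20}{3}L_T^2$ requires $L_T\gtrsim1.6$. I would tighten the count in two ways. First, sum $\frac{n_k}{N_k-1}$ harmonically instead of per episode: with the ratio $n_k/(N_k-1)$ controlled by doubling, this gives roughly $2\log(N_T)\le L_T$ per pair, plus a first-episode term of about $2$. Second, verify $L_T\ge\log(9SA)\ge\log 45$, so $L_T\ge3.8$, which leaves ample room for the lower-order terms.
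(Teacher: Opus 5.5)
Your architecture matches the paper's proof. You split off the first two observations of each pair (at most $2SA$), bound the square-root part using $\widehat v^r_k\le1/4$ and a doubling sum, and bound the linear Bernstein part via $1/(N_k-1)\le2/N_k$ and a logarithmic count. Step~2 is correct. Because you use the standard $(\sqrt2+1)$ doubling constant rather than the paper's $2+\sqrt2$, you actually obtain $\tfrac{2+\sqrt2}{2}$, a factor $\sqrt2$ below the stated coefficient. The gap is in Step~3, which is where the constant $\tfrac{20}{3}$ is decided.

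Two things go wrong there.

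\emph{The episode count.} A pair is not visited in at most $1+\log_2N_T(s,a)$ episodes. An episode can be ended by a different pair doubling, so $(s,a)$ can appear in many episodes with $n_k\ll N_k$. What you need is the sum bound \eqref{eq:doublinglinear}, which holds without counting episodes: for $N_k\ge1$ you have $n_k\le N_k$, hence $n_k/N_k\le\log_2(1+n_k/N_k)=\log_2(N_{k+1}/N_k)$, and this telescopes.

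\emph{The constant bookkeeping.} Bounding $1+\log_2N_T$ by $1+L_T$ leaves the requirement $\tfrac{14}{3}(L_T+L_T^2)+2\le\tfrac{20}{3}L_T^2$. That is equivalent to $L_T\ge(7+\sqrt{85})/6\approx2.70$, not $1.6$. Your rescue $L_T\ge\log45$ silently assumes $SA\ge5$, which is not a hypothesis of the lemma. With $SA=1$, $c_L=\delta=1$, $T=2$ one only has $L_T=\log9\approx2.20$. The harmonic alternative you sketch is not quantified.

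The missing step is the one the paper uses: for $T\ge2$, $c_L\ge1$, $\delta\le1$,
\[
1+\log_2T\le2\log(1+T)\le L_T ,
\]
so the additive one is absorbed into $L_T$ itself rather than carried as $1+L_T$. Then the linear part is at most $\tfrac{14}{3}SAL_T^2$. The first two observations add $2SA\le2SAL_T^2$. Hence $\tfrac{14}{3}+2=\tfrac{20}{3}$, requiring only $L_T\ge1$.
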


\paragraph{Completed reward entry.}
The square-root coefficient in Lemma~\ref{lem:rewardbudget} is a deterministic consequence of the
chosen episode rule. If an episode begins with count $n_k$ and contributes at most $n_k$ new visits, the
worst geometric sequence $1,2,4,\ldots$ gives
$\sum_k v_k/\sqrt{n_k}\le(2+\sqrt2)\sqrt N$. Summing over state--action pairs and using
$\widehat v_k^r\le1/4$ yields $(2+\sqrt2)/\sqrt2$ in front of $\sqrt{SATL_T}$. The linear Bernstein
remainder, including the first two observations of each pair, contributes the conservative
$(20/3)SA L_T^2$ term. These constants apply to episode-frozen bonuses; a smaller per-visit harmonic
constant would describe a different algorithm and cannot be substituted into this ledger.

\begin{assumption}[Directional transition budget]\label{lem:variancebudget}
For explicit constants $C_p,C'_p,q_p$,
\[
\sum_{t=1}^T\beta^p_{k(t)}(s_t,a_t;\widetilde h_{k(t)})
\le C_p\sqrt{\bar H SATL_T}+C'_p\bar HSA L_T^{q_p}.
\]
\end{assumption}

\begin{assumption}[Remainder and planning budget]\label{lem:remainder}
For explicit constants $C_m,C'_m,q_m,C_\epsilon$, the martingale, episode-boundary, and planning terms
are at most
\[
C_m\sqrt{\bar HTL_T}+C'_m\bar HSA L_T^{q_m}+C_\epsilon\sqrt T.
\]
\end{assumption}

\begin{proposition}[Conditional ledger composition]\label{thm:upper}
If a concrete algorithm proves the uniform confidence, directional-variance, and remainder obligations
with numerical constants, then with probability at least $1-\delta$,
\begin{equation}
\Rcal_T\le C_{\rm UB}\sqrt{\bar H S A T L_T}
+C_{\rm low}\bar H S A L_T^q+C_\epsilon\sqrt T,
\label{eq:uppercertificate}
\end{equation}
where $C_{\rm UB}$ is the sum of the reward, transition, and martingale square-root coefficients. Only the
reward entry is completed here; therefore no upper coefficient is claimed.
\end{proposition}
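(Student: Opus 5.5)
The proof is a standard optimistic regret decomposition in which each term is charged to one of the three obligations or to Lemma~\ref{lem:rewardbudget}, followed by a union bound on the failure events.

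\textbf{Step 1 (optimism).} Work on the intersection of the confidence event of Assumption~\ref{lem:optimism} with the martingale events needed below. The peeling logarithm $L_k$ in \eqref{eq:peelinglog} lets us allocate failure probability so that the total is at most $\delta$. On this event, at each episode start $t_k$ the planner returns a policy $\widetilde\pi_k$, an optimistic model $(\widetilde r_k,\widetilde p_k)$ in the confidence set, a gain $\widetilde\rho_k\ge\rho^\star-\epsilon_k$ and a bias $\widetilde h_k$ with $\spn(\widetilde h_k)\le\bar H$, where $\epsilon_k$ is the reported planning residual. They satisfy the optimistic Bellman relation up to that residual. Summing over time gives
\[
\Rcal_T\le\sum_t(\widetilde\rho_{k(t)}-R_t)+\sum_t\epsilon_{k(t)}.
\]

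\textbf{Step 2 (decomposition).} Substitute the optimistic Bellman equation for $\widetilde\rho_{k(t)}$ and split $\widetilde\rho_{k(t)}-R_t$ into four parts:
\begin{itemize}
\item[(a)] $\widetilde r_k-r$, bounded by $2\beta^r_k$ on the confidence event;
\item[(b)] $r(s_t,a_t)-R_t$, a bounded martingale difference;
\item[(c)] $(\widetilde p_k-p)(\cdot\mid s_t,a_t)\widetilde h_k$, bounded by $2\beta^p_k(s_t,a_t;\widetilde h_k)$;
\item[(d)] $p(\cdot\mid s_t,a_t)\widetilde h_k-\widetilde h_k(s_t)$.
\end{itemize}
Within an episode the bias is frozen, so (d) telescopes into a martingale term $p\widetilde h_k-\widetilde h_k(s_{t+1})$ plus one boundary term of size at most $\bar H$ per episode. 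Count doubling gives at most $SA\log_2(2T)$ episodes, which is absorbed into $C'_m\bar HSAL_T^{q_m}$.

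\textbf{Step 3 (charging).} Each part is charged to an earlier result:
\begin{itemize}
\item part (a) goes to Lemma~\ref{lem:rewardbudget};
\item part (c) goes to Assumption~\ref{lem:variancebudget};
\item the martingale pieces of (b) and (d), the episode boundaries, and $\sum_t\epsilon_{k(t)}$ go to Assumption~\ref{lem:remainder}.
\end{itemize}
Adding the square-root coefficients gives
\[
C_{\rm UB}=2\cdot\tfrac{2+\sqrt2}{\sqrt2}+2C_p+C_m ,
\]
where the factor $2$ is absorbed if the obligations are stated for the doubled radius. The bound $\sqrt{\bar HTL_T}\le\sqrt{\bar HSATL_T}$ merges the martingale term, and the lower-order terms combine with $q=\max(2,q_p,q_m)$, using $L_T\ge1$, and $C_{\rm low}=\tfrac{20}{3}+C'_p+C'_m$. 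This yields \eqref{eq:uppercertificate}.

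The main obstacle is not the composition itself, which is bookkeeping. It is ensuring that the events used in Steps 1--3 are the same events on which the assumptions are asserted. In particular, part (c) uses the data-dependent $\widetilde h_k$, so Assumption~\ref{lem:optimism} must be uniform over biases. The plan does not prove that uniformity; it records it as the hypothesis, consistent with the proposition's conditional form.
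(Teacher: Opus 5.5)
Your argument follows the paper's route in Appendix B.4--B.5. It works on the optimism event, inserts the extended Bellman equation, and splits the regret into reward, directional-transition, martingale, telescoping/boundary and planning pieces. These are charged to Lemma~\ref{lem:rewardbudget} and Assumptions~\ref{lem:optimism}--\ref{lem:remainder}, and the square roots are merged using $\bar H\ge1$ and $SA\ge1$.

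The only real difference is the tally of constants, and there you are more careful than the paper. Its proof writes $C_{\rm UB}=C_r+C_p+C_m$ by charging the reward term to Lemma~\ref{lem:rewardbudget} as a single $\sum_t\beta^r$, and it lists no reward martingale. Your split correctly shows that on the confidence event $\widetilde r_k-r$ costs up to $2\beta^r_k$. It also shows that $(\widetilde p_k-p)\widetilde h_k$ costs up to $2\beta^p_k(s_t,a_t;\widetilde h_k)$, and that $r(s_t,a_t)-R_t$ must be paid for separately.

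Your own bookkeeping needs three corrections:
\begin{itemize}
\item The remark that the factor $2$ is absorbed by restating the obligations for the doubled radius works only for the open entry $C_p$. $C_r=(2+\sqrt2)/\sqrt2$ is a proved bound on $\sum_t\beta^r$, so the reward contributes $2C_r=2+2\sqrt2$ to $C_{\rm UB}$.
\item For the same reason, the reward's lower-order coefficient is $\frac{40}{3}$, not the $\frac{20}{3}$ you wrote. Likewise $2C'_p$ replaces $C'_p$ unless Assumption~\ref{lem:variancebudget} is restated for the doubled radius.
\item The martingale in Assumption~\ref{lem:remainder} is the transition martingale of the paper's decomposition, so folding term (b) into it stretches that assumption. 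Instead, bound $\sum_t(r(s_t,a_t)-R_t)$ directly by Azuma--Hoeffding using part of the failure budget. This adds a proved $O(\sqrt{TL_T})$ entry to $C_{\rm UB}$ rather than an assumed one.
\end{itemize}
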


\paragraph{Why the upper constants add rather than multiply.}
On the optimism event, the regret decomposition inserts the extended Bellman equation and separates five
terms: reward estimation, directional transition estimation, the transition martingale, changes of the
optimistic bias across episode boundaries, and the planning residual. Count doubling controls the reward
term and yields Lemma~\ref{lem:rewardbudget}. Assumption~\ref{lem:variancebudget} would control the
second term, while Assumption~\ref{lem:remainder} groups the remaining stochastic, boundary, and planning
contributions. After each term is placed on the common scale $\sqrt{\bar H SATL_T}$, their leading
coefficients add to $C_{\rm UB}$. This bookkeeping rules out a common shortcut in which a local
Bernstein factor is multiplied by an informal ``optimism factor'' while martingale and planning terms
are left implicit. It also makes the status of the result auditable: the reward coefficient is proved,
the composition is proved conditionally, and the transition and planning entries remain open. Until a
single concrete polynomial-time planner certifies those entries under one uniform confidence event,
Table~\ref{tab:upperledger} cannot support a numerical upper constant.

A fixed-algorithm expectation conversion is valid: run the completed algorithm once with $\delta=1/T$;
since $\Rcal_T\le T$, the expected regret is at most the right-hand side of
\eqref{eq:uppercertificate} at $\delta=1/T$ plus $1$. Varying $\delta$ inside a tail integral would instead
vary the algorithm. Likewise, an upper-to-lower ratio is a constant only when the two sides share
probability mode, structural term, and logarithmic normalization. If the upper bound retains
$\sqrt{L_T}$ while the lower bound is log-free, the correct object is a horizon-dependent envelope, not
$C_{\rm UB}/c_{\rm LB}$ alone.

\begin{table}[t]
\centering
\small
\resizebox{\columnwidth}{!}{%
\begin{tabular}{lll}
\toprule
Ledger entry & Leading constant & Status\\
\midrule
Reward estimation & $(2+\sqrt2)/\sqrt2$ & proved\\
Adaptive transition variance & $C_p$ & open\\
Martingale/boundary control & $C_m$ & open\\
Planning residual & $C_\epsilon$ & open\\
Final $C_{\rm UB}$ & sum of leading entries & not claimed\\
\bottomrule
\end{tabular}}
\caption{Upper-certificate ledger. Open leading entries prevent a numerical upper coefficient.}
\label{tab:upperledger}
\end{table}

\begin{corollary}[Fixed-algorithm expectation conversion]\label{cor:expect}
If Proposition~\ref{thm:upper} is completed for a fixed algorithm run with $\delta=1/T$, then its
expected regret is at most the right-hand side of \eqref{eq:uppercertificate} plus $1$.
\end{corollary}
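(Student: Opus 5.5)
The plan is to write expected regret as the integral of its tail and to use only two ingredients: the completed high-probability statement of Proposition~\ref{thm:upper} for the fixed algorithm run with confidence parameter $\delta=1/T$, and the deterministic bound $\Rcal_T\le T$. The latter holds because $\rho^\star\le1$ and rewards are nonnegative. The point to keep explicit is that the algorithm is fixed once: $L_T$ is evaluated at $\delta=1/T$, and the failure probability is never varied inside an integral.

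First, fix the algorithm with $\delta=1/T$. Let $B$ denote the right-hand side of \eqref{eq:uppercertificate} at $\delta=1/T$, so that
\[
L_T=\log\!\bigl(c_LSA(1+T)^2T\bigr).
\]
Define the good event $\mathcal E=\{\Rcal_T\le B\}$. The completed proposition gives $\Prob(\mathcal E^c)\le1/T$.

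Second, split the expectation on $\mathcal E$ and use $\Rcal_T\le T$ on the complement:
\[
\E[\Rcal_T]=\E[\Rcal_T\mathbf 1_{\mathcal E}]+\E[\Rcal_T\mathbf 1_{\mathcal E^c}]
\le B\,\Prob(\mathcal E)+T\,\Prob(\mathcal E^c).
\]
Realized regret can be negative, so I will not try to bound $\E[\Rcal_T\mathbf 1_{\mathcal E}]$ from below. Only upper bounds are needed here. On $\mathcal E$ we have $\Rcal_T\le B$, and if $B\ge0$ then $B\,\Prob(\mathcal E)\le B$. The bound $B\ge0$ holds because every term of \eqref{eq:uppercertificate} has a nonnegative coefficient. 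On the complement, $T\cdot(1/T)=1$. Together these give $\E[\Rcal_T]\le B+1$, which is the claim.

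The main obstacle is not the computation but making precise what ``completed for a fixed algorithm'' means. The hypothesis must hold for this specific $\delta=1/T$, with constants independent of $\delta$ apart from their appearance through $L_T$. The confidence event must also be the same event that governs the run. I would state these requirements explicitly, and then the argument above goes through verbatim.
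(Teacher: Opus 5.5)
Your proof is correct and is the same argument the paper gives. You split $\E[\Rcal_T]$ over the success event $\mathcal E$ of the completed Proposition at $\delta=1/T$, bound the success part by the nonnegative right-hand side $B$, and bound the failure part by $T\,\Prob(\mathcal E^c)\le 1$ using $\Rcal_T\le T$. Your opening mention of a tail integral is superfluous, since what you actually carry out is the direct event split, and that split is exactly what keeps the algorithm fixed, as the paper stresses.
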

The proof splits expectation over the success event and uses $\Rcal_T\le T$ on failure. An anytime
conversion would require one fixed confidence-sequence algorithm and its additional constants.

\begin{definition}[Comparable coefficient ratio]\label{def:kappa}
Suppose an expected lower bound and an expected upper bound use the same structural normalization
$G(S,A,\diam,T)$ and the same logarithmic factor $\Lambda_T$. Only then do we define the constant ratio
\[
\kappa=\frac{C_{\rm UB}}{c_{\rm LB}}.
\]
If the upper bound contains a nonconstant factor $\Lambda_T$ absent from the lower bound, the appropriate
comparison is instead the finite-horizon envelope
\begin{equation}
\kappa_T=
\frac{C_{\rm UB}}{c_{\rm LB}}\Lambda_T
+\frac{\text{upper lower-order terms}}
{c_{\rm LB}G(S,A,\diam,T)}.
\label{eq:kappat}
\end{equation}
\end{definition}

\begin{corollary}[No constant-factor conclusion with an unmatched logarithm]\label{cor:gap}
If the lower bound is $c_{\rm LB}\sqrt{\diam SAT}$ and a completed upper bound on the same family is
$C_{\rm UB}\sqrt{\diam SATL_T}$, then $C_{\rm UB}/c_{\rm LB}$ is only a ledger diagnostic. The full
upper-to-lower envelope retains at least
$(C_{\rm UB}/c_{\rm LB})\sqrt{L_T}$ until a common logarithmic normalization is proved.
\end{corollary}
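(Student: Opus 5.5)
The statement to prove is Corollary~\ref{cor:gap}. It is essentially a bookkeeping consequence of Definition~\ref{def:kappa}, so I will argue directly from the two displayed bounds rather than through any new probabilistic estimate. I take $G(S,A,\diam,T)=\sqrt{\diam SAT}$ as the common structural normalization. The lower bound is then $c_{\rm LB}G$ with trivial logarithmic factor $1$. The completed upper bound is $C_{\rm UB}G\,\Lambda_T$ with $\Lambda_T=\sqrt{L_T}$, plus whatever lower-order terms Proposition~\ref{thm:upper} carries, namely $C_{\rm low}\bar HSAL_T^q+C_\epsilon\sqrt T$.

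First, I check that the hypothesis of the first clause of Definition~\ref{def:kappa} fails. The factor $\sqrt{L_T}$ is nonconstant in $T$: by \eqref{eq:peelinglog}, $L_T=\log(c_LSA(1+T)^2/\delta)$ grows without bound in $T$ for fixed $S,A,\delta$, and with $\delta=1/T$ from Corollary~\ref{cor:expect} it grows like $3\log T$. The two bounds therefore do not share a logarithmic normalization. By the constant-certificate definition their log-term entries differ, so $\kappa=C_{\rm UB}/c_{\rm LB}$ is not a certified constant ratio but only a ledger diagnostic.

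Second, I form the ratio of the upper to the lower envelope on the same family, pointwise in $(S,A,\diam,T)$. Dividing gives exactly $\kappa_T$ in \eqref{eq:kappat}:
\[
\frac{C_{\rm UB}\sqrt{\diam SATL_T}+\text{(lower order)}}{c_{\rm LB}\sqrt{\diam SAT}}
=\frac{C_{\rm UB}}{c_{\rm LB}}\sqrt{L_T}+\frac{\text{(lower order)}}{c_{\rm LB}\sqrt{\diam SAT}}.
\]
Since the lower-order terms are nonnegative, dropping them gives $\kappa_T\ge (C_{\rm UB}/c_{\rm LB})\sqrt{L_T}$. This is the claimed retained factor.

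Finally, I justify the phrase ``until a common logarithmic normalization is proved.'' The only ways to remove $\sqrt{L_T}$ from the ratio are a lower bound that itself carries $\sqrt{L_T}$ or an upper bound that is log-free; either one changes the log entry of one side of the certificate. The only subtle point is the interpretive one: one must not absorb $\sqrt{L_T}$ into the constant, because $\sup_T\sqrt{L_T}=\infty$. No step is a real obstacle. The care needed is in stating that the ratio is an envelope depending on $T$ (and on $\delta$), and that it is not bounded by any constant uniformly over the finite regime.
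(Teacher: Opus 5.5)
Your proposal is correct and matches the paper's reasoning. The paper gives no separate proof: the corollary is just Definition~\ref{def:kappa} instantiated with $G=\sqrt{\diam SAT}$ and $\Lambda_T=\sqrt{L_T}$, so that $\kappa_T$ equals $(C_{\rm UB}/c_{\rm LB})\sqrt{L_T}$ plus a nonnegative lower-order term. Importing the $\bar H$-based lower-order terms of Proposition~\ref{thm:upper} into a diameter-normalized bound is harmless, since you only use their nonnegativity.
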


The main sources of possible upper-bound slack are also distinct: bonus shape, valid confidence geometry,
span width, and the reciprocal lower coefficient. Replacing a span width $\bar H$ by diameter inflates a
square-root term by $\sqrt{\diam/\bar H}$, while width-linear lower-order terms may incur
$\diam/\bar H$; these factors should not be conflated.

\begin{table*}[t]
\centering
\small
\resizebox{\textwidth}{!}{%
\begin{tabular}{p{0.7cm}p{3.3cm}p{7.1cm}p{4.0cm}}
\toprule
RQ & Design & Main result & Interpretation\\
\midrule
1 & Zero-span diameter sweep & Paired regret slopes: span width $-5.9\pm17.0$; diameter-width $-7.8\pm18.3$; UCRL2-style radius $+6.8\pm16.5$. & No detected positive diameter dependence; not proof that the true slope is zero.\\
2 & Bernstein/Hoeffding $\times$ span/diameter width & Bernstein$-$Hoeffding contrast $-520\pm523$ at span width and $+286\pm431$ at diameter width. & Mixed, nonsignificant bonus effect; no bonus-shape improvement claim.\\
3 & Fixed diameter, eight width ratios & Diameter/span regret penalty grows $1.774\to2.616$; square-root fit RMS $0.083$ vs. linear $0.108$, RMSE difference $0.026$ $[0.025,0.027]$. & Sublinear width penalty on this grid; not a universal scaling law.\\
4 & Exact lower family, all alternatives & Certificates $102.1,276.4,883.8$ lie below alternative-average regrets $31{,}882,81{,}157,39{,}551$. & Implementation-consistency check; certificates are only $0.32$--$2.2\%$ of observed regret.\\
\bottomrule
\end{tabular}}
\caption{Confirmatory outcomes. Intervals are $95\%$; full denominators, simultaneous corrections, and
per-family tables are in Supplementary Appendix D.}
\label{tab:diagnostics}
\end{table*}

\section{Controlled Diagnostics}\label{sec:exp}
The experiments test mechanisms implied by the corrected analysis rather than estimate a theorem
constant. RQ1--RQ3 use the same heuristic span-clipped optimistic routine under controlled changes in
bonus and supplied width. We do not identify this routine with SCAL or PMEVI-DT because coordinatewise
clipping is not a certified planning operator. RQ4 uses the exact family from the finite lower-certificate section and
runs every alternative. All primary comparisons are paired and use matched horizons; environment
integrity checks, uncertainty calculations, protocol tables, and negative pilots are supplementary.
RQ1 tests whether raw regret displays detectable positive diameter dependence when the optimal bias span
is zero. RQ2 isolates bonus shape from supplied planning width. RQ3 holds the transition geometry fixed
and changes only the width supplied to the heuristic planner. RQ4 is different: it evaluates the exact
proved family and checks the direction of the finite certificate. These diagnostics test mechanisms and
implementation consistency; none supplies the missing adaptive-variance or planning proof required by
Proposition~\ref{thm:upper}.

\paragraph{Claim-to-test map.}
RQ1 targets the floor in $\Hcomp=1\vee\spn(h^\star)$: when the optimal bias span is zero, a detectable
positive diameter slope would contradict the intended diameter-independent diagnostic behavior of the
span-width heuristic, whereas a null result is only absence of evidence at the tested scale. RQ2 asks
whether any observed improvement comes from the Bernstein bonus itself or from the supplied planning
width; the factorial design is therefore interpreted through matched-width contrasts rather than an
unmatched headline average. RQ3 holds the transition kernel and measured diameter fixed and varies only
the supplied width, isolating the cost of conservative span knowledge. Its square-root fit is a local
shape comparison over eight design points, not a universal regret law. RQ4 is tied directly to
Theorem~\ref{thm:lower}: because the proof averages over all $m$ alternatives before taking a maximum,
the check runs the complete alternative population rather than a convenient single instance. Passing
RQ4 confirms construction and certificate consistency, but the large gap between the certificate and
observed regret shows that it is not evidence of practical tightness.

\paragraph{Uncertainty and multiplicity.}
RQ1 and RQ2 report one primary paired contrast. RQ3 uses Bonferroni-simultaneous $95\%$ intervals over
eight ratios; its RMS-fit interval resamples paired seeds within each width ratio. RQ4 treats the alternatives as the finite
population, resampling seeds within each alternative and using simultaneous alternative-wise bounds for
the empirical maximum.

\FloatBarrier
\section{Discussion and Limitations}
The lower result is a finite minimax certificate over communicating diameter-$\le\diam$ tabular MDPs,
not an exact minimax constant. Its strongest coefficients require action-rich regimes; the broad
$A\ge5$ row has conservative diameter and horizon thresholds. The upper result is an audit template,
not a theorem: pointwise Bernstein bounds, empirical support restriction, or naive clipping do not
resolve adaptive confidence and planning. The span bound is supplied side information, and removing it
requires a separately analysed adaptation procedure. The experiments are controlled mechanism
diagnostics of a heuristic planner and do not validate the open upper ledger or generalize beyond the
studied tabular families. The main open directions are better small-$A$ frontiers, certified prior-free
span adaptation, and a log-matched expected upper bound under the same normalization.

Average-reward constants are meaningful only with their probability mode, structural term, logarithmic
normalization, finite regime, and side information. Our exact-KL construction proves a finite lower
frontier from $0.0152$ to $0.0291$. The upper ledger exposes the adaptive-variance and planning
certificates still needed for a valid span-dependent coefficient. Probability conversion and coefficient
comparisons use matched normalizations, while experiments remain diagnostic rather than substitutes for
theorem obligations.

\bibliography{references}

\clearpage
\newpage

\section*{Technical Appendices of Finite Constant Frontiers and Auditable Regret Certificates for Average-Reward
Reinforcement Learning}
The following appendices collect the full proofs, extended experiments, retained pilots, and reproducibility details. They are merged here for internal review and can be separated for submission.

\section*{Scope of the Technical Appendices}
This document contains complete proofs for the lower- and upper-certificate statements, full confirmatory experimental details, retained pilot results, and reproducibility information supporting the main paper. It uses the definitions, notation, equation numbers, theorem numbers, and table labels of the main paper. The visible appendix identifiers below are fixed for stable navigation.

\appendix

\section{Appendix A: Lower-Certificate Proofs}\label{app:lower}

\subsection{A.1 Kernel, action budget, and diameter}

The $S=2K$ states are exactly the pairs $(x_j,y_j)$, so no auxiliary state is hidden in the
construction. A binary-tree vertex has at most three neighbors. Assign one navigation action to each
possible parent/left-child/right-child edge and make absent edges self-loops. Together with the $A-3$
statistical actions, every bad state has exactly $A$ actions. All $A$ actions at a good state are
identical copies of the return transition. This proves the state and action entries of
Table~\ref{tab:lowerledger}.

For any ordered pair of states, a policy can first leave a good state if necessary, follow the unique
tree path between the corresponding bad states, and repeatedly use a baseline statistical action to
enter the target good state if necessary. The three contributions are at most $1/\delta$, $L$, and
$1/\delta$, respectively. Equation~\eqref{eq:compositediameter} follows. A possibly faster boosted
transition can only decrease a hitting time, so the same upper bound holds for every alternative.

\subsection{A.2 Optimal gain, bias, and navigation gaps}

Fix alternative $i$ and let $j(i)$ be the block containing its distinguished coordinate. Define $h_i$
by \eqref{eq:explicitbias}. At every good state,
\[
1+(1-\delta)h_i(y_v)+\delta h_i(x_v)
=\rho_i+h_i(y_v).
\]
At $x_{j(i)}$, the distinguished statistical action also satisfies equality because
$(\delta+\varepsilon)(1-\rho_i)/\delta=\rho_i$. At any other bad state, a navigation action toward
$j(i)$ satisfies
\[
h_i(x_w)=h_i(x_v)+\rho_i.
\]
Thus \eqref{eq:explicitbias} and $\rho_i$ satisfy the optimality equations.

A baseline statistical action at any bad state has Bellman value
$h_i(x_v)+1-\rho_i$, so its Bellman gap is $2\rho_i-1=g(\delta,\varepsilon)$. For a navigation neighbor
$w$, the gap is
\[
\rho_i\{1+d_{\mathcal T}(w,j(i))-d_{\mathcal T}(v,j(i))\},
\]
which equals zero toward the distinguished block and $2\rho_i$ away from it; a missing-edge self-loop
has gap $\rho_i$. If $n_-$, $n_+$, and $n_0$ count moves toward, moves away, and self-loops, respectively,
then $n_-\le n_++L$. Hence
\[
2n_++n_0\ge n_-+n_++n_0-L=N_{\rm nav}(T)-L,
\]
which proves \eqref{eq:navigationgap}. Finally, bad-state biases lie in $[-\rho_iL,0]$, and every
good-state bias is obtained by adding $(1-\rho_i)/\delta$. This proves
\eqref{eq:biasboundary}.

\subsection{A.3 Flow and occupancy}

Let $I_t$ indicate that $s_t$ is a good state. The number of entries into good states minus the number
of exits equals $I_{T+1}-I_1$. Conditional on the history, an entry probability is at most
$\delta+\varepsilon$, while an exit probability is exactly $\delta$. Taking expectations gives
\eqref{eq:flowidentity}. Since $T=N_x+N_y+N_{\rm nav}$,
\[
T-\E_iN_{\rm nav}
\le\left(2+\frac{\varepsilon}{\delta}\right)\E_iN_x+\frac1\delta,
\]
which is exactly \eqref{eq:occupancy}. This argument treats navigation inside the original MDP; no
free-switching relaxation or missing simulation map is used.

\subsection{A.4 Exact divergence calculations}

For transition parameters $\delta$ and $\delta+\varepsilon$, the exact one-observation divergence is
\eqref{eq:exacttransitionkl}. Differentiating it with respect to $\varepsilon$ gives
\[
\frac{\partial}{\partial\varepsilon}
\kl(\delta,\delta+\varepsilon)
=\frac{\varepsilon}{(\delta+\varepsilon)(1-\delta-\varepsilon)}.
\]
Therefore
\begin{equation}
\kl(\delta,\delta+\varepsilon)
=\int_0^\varepsilon
\frac{z\,dz}{(\delta+z)(1-\delta-z)}
\le\frac{\varepsilon^2}{2\delta(1-\delta-\varepsilon)}.
\label{eq:klfiniteupper}
\end{equation}
The exact expression is used in \eqref{eq:clbfinite}; the finite upper envelope is used only to prove
the uniform numerical corollary.

The symmetric Bernoulli calculation discussed in the earlier draft is, correctly,
\begin{align}
\kl\!\left(\tfrac12+\Delta,\tfrac12-\Delta\right)
&=2\Delta\log\frac{1/2+\Delta}{1/2-\Delta}\notag\\
&=8\Delta^2+\frac{32}{3}\Delta^4+O(\Delta^6).
\label{eq:symmetrickl}
\end{align}
The discarded intermediate expression
$4\Delta^2/(1/4-\Delta^2)$ has leading term $16\Delta^2$ and is not equal to
\eqref{eq:symmetrickl}. More generally, an asymptotic expansion cannot establish an exact finite-horizon
coefficient unless the remainder is bounded and included.

\subsection{A.5 Proof of Lemma~\ref{lem:testing}}

For any $Z\in[0,T]$,
\[
|\E_i Z-\E_0 Z|
\le T\TV(\Prob_i^T,\Prob_0^T)
\le T\sqrt{\frac12\KL(\Prob_0^T\|\Prob_i^T)},
\]
where the final inequality is Pinsker's inequality. Taking $Z=N_i(T)$ and applying the adaptive chain
rule \eqref{eq:pathkl} gives
\[
\E_i[N_i(T)]
\le \E_0[N_i(T)]
+T\sqrt{\frac12\E_0[N_i(T)]
\kl(\delta,\delta+\varepsilon)}.
\]
Averaging over $i$, using $\sum_i\E_0[N_i(T)]\le T$, and applying concavity of the square root yields
\begin{align*}
\frac1m\sum_i\E_i[N_i(T)]
&\le \frac{T}{m}
+\frac{T}{m}\sqrt{\frac12\kl(\delta,\delta+\varepsilon)}\\
&\hspace{13mm}\times\sum_i\sqrt{\E_0[N_i(T)]}\\
&\le \frac{T}{m}
+T\sqrt{\frac{T}{2m}\kl(\delta,\delta+\varepsilon)}.
\end{align*}
which is \eqref{eq:goodpullbound}.

\subsection{A.6 Bretagnolle--Huber alternative}

The Bretagnolle--Huber inequality states that for any event $E$,
\begin{equation}
\Prob_0(E)+\Prob_i(E^c)
\ge\frac12\exp\{-\KL(\Prob_0^T\|\Prob_i^T)\}.
\label{eq:BH}
\end{equation}
It is not the square-root total-variation bound used above. To obtain a count certificate, choose a
threshold $u\in(0,T)$ and set $E=\{N_i(T)>u\}$. Markov's inequality and
\eqref{eq:pathkl} imply
\begin{align}
\Prob_i(N_i(T)\le u)
\ge{}&\frac12\exp\bigl\{-\E_0[N_i(T)]
\kl(\delta,\delta+\varepsilon)\bigr\}\notag\\
&-\frac{\E_0[N_i(T)]}{u}.
\label{eq:BHcount}
\end{align}
Therefore
\[
\E_i[T-N_i(T)]
\ge (T-u)\Prob_i(N_i(T)\le u).
\]
This last display does not by itself lower-bound regret in the composite MDP: $T-N_i$ includes good-state
and navigation time, whereas statistical regret is charged through $N_x-N_i$. A valid
Bretagnolle--Huber improvement would need a joint event controlling both $N_x$ and $N_i$, followed by
the navigation argument in \eqref{eq:navigationgap}. We retain \eqref{eq:BHcount} to distinguish the two
testing tools, but Theorems~\ref{thm:lower} and \ref{thm:frontier} use Pinsker only.

\subsection{A.7 Proof of Theorem~\ref{thm:lower}}

Fix alternative $M_i$. The average-reward
performance-difference identity gives
\begin{equation}
\E_i[\Rcal_T]
=\E_i\left[\sum_{t=1}^T
\Delta_i^\star(s_t,a_t)\right]
+\E_i[h_i^\star(s_{T+1})-h_i^\star(s_1)],
\label{eq:performancediff}
\end{equation}
where $\Delta_i^\star(s,a)=\rho^\star+h_i^\star(s)-r(s,a)-p(\cdot\mid s,a)^\top h_i^\star$ is the
(nonnegative) Bellman gap. The telescoping of $\E_i[p(\cdot\mid s_t,a_t)^\top h_i^\star]
=\E_i[h_i^\star(s_{t+1})]$ leaves exactly the boundary difference
$h_i^\star(s_{T+1})-h_i^\star(s_1)$, which is at least $-\spn(h_i)\ge-B_h$ in either direction. Statistical actions
other than coordinate $i$ have gap $g(\delta,\varepsilon)$, and \eqref{eq:navigationgap} bounds the
per-trajectory navigation contribution by $\rho_i(N_{\rm nav}(T)-L)_+$. Taking expectations, the map
$z\mapsto(z-L)_+$ is convex, so Jensen's inequality gives
\begin{equation}
\E_i\bigl[(N_{\rm nav}(T)-L)_+\bigr]\ge\bigl(\E_i N_{\rm nav}(T)-L\bigr)_+ .
\label{eq:jensennav}
\end{equation}
Writing $q_i=\E_iN_{\rm nav}(T)$ and using \eqref{eq:occupancy} for the occupancy term and
\eqref{eq:jensennav} for the navigation term,
\begin{align*}
\E_i[\Rcal_T]
\ge{}&g\left[c_\varepsilon
\left(T-q_i-\frac1\delta\right)-\E_iN_i(T)\right]\\
&+\rho_i(q_i-L)_+-B_h.
\end{align*}
Because $g\le\rho_i$ and $c_\varepsilon\le1$, minimizing the right-hand side over $q_i\ge0$ occurs at
$q_i=L$: for $q_i\le L$ the negative term is minimized at $L$, while for $q_i\ge L$ the slope is
$\rho_i-gc_\varepsilon\ge0$. Consequently
\[
\E_i[\Rcal_T]
\ge g\left[c_\varepsilon
\left(T-\frac1\delta-L\right)-\E_iN_i(T)\right]-B_h.
\]
Average over $i$, apply Lemma~\ref{lem:testing}, and use that a maximum is at least an average. This is
\eqref{eq:lowercertificate}.

The proof contains no assertion that each coordinate is visited $T/(\diam S A)$ times and no separate
$\Theta(\varepsilon\diam)$ loss multiplier. Both the information and regret effects follow from the
kernel through \eqref{eq:pathkl} and \eqref{eq:bellmangap}.

\subsection{A.8 Proof of Theorem~\ref{thm:frontier} and Corollary~\ref{cor:lowerimprove}}

Fix one regime $\mathcal G(S_\circ,A_\circ,d_\circ,C_\circ)$ and set $\eta=2/3$,
$r=m/(SA)=(A-3)/(2A)$, $\alpha=(\diam-L)/\diam$, and
$u=\varepsilon/\delta$. The heap tree for any $S\ge S_\circ$ contains the heap tree at $S_\circ$ as
a prefix, so $L\ge L_\circ$. The regime definition gives
\begin{equation}
r\ge r_\circ,\qquad m\ge m_\circ,\qquad SA\ge n_\circ,
\qquad \alpha\ge\alpha_\circ.
\label{eq:frontiermonotone}
\end{equation}
Moreover,
\begin{align}
u
&=\frac{\eta(\diam-L)}2\sqrt{\frac{m}{\diam T}}
=\frac{\eta\alpha}{2}\sqrt{\frac{m\diam}{T}}
\le \frac{\eta\sqrt r}{2\sqrt{C_\circ}}
\le u_\circ,
\label{eq:ubound}\\
\delta
&=\frac2{\diam-L}
\le\frac2{(d_\circ-1)L+d_\circ}
\le\delta_\circ.
\label{eq:deltabound}
\end{align}
Thus $u_\circ\le1$ makes $\varepsilon\le\delta$, while
$\delta+\varepsilon=\delta(1+u)\le q_\circ<1$ makes the finite KL upper bound applicable. Also
$c_\varepsilon=(2+u)^{-1}\ge c_\circ$.

Normalize the square bracket in \eqref{eq:lowercertificate} by $T$. Its deterministic boundary terms
satisfy
\begin{align}
\frac1{\delta T}=\frac{\diam-L}{2T}
&\le\frac1{2C_\circ SA}
\le\frac1{2C_\circ n_\circ},\notag\\
\frac LT
&\le\frac1{C_\circ d_\circ SA}
\le\frac1{C_\circ d_\circ n_\circ}.
\label{eq:flowbounduniform}
\end{align}
Using \eqref{eq:klfiniteupper}, the testing term obeys
\begin{align}
\sqrt{\frac{T}{2m}\kl(\delta,\delta+\varepsilon)}
&\le \frac{\varepsilon\sqrt T}
{\sqrt{4m\delta(1-\delta-\varepsilon)}}\notag\\
&=\frac{\eta}{\sqrt{4\diam\delta(1-\delta-\varepsilon)}}\notag\\
&\le\frac{\eta}{\sqrt{8(1-q_\circ)}}=v_\circ,
\label{eq:testingnumeric}
\end{align}
where $\diam\delta=2\diam/(\diam-L)\ge2$. Equations
\eqref{eq:frontiermonotone}--\eqref{eq:testingnumeric}, together with $w_\circ\ge0$, give
\begin{equation}
c_\varepsilon\left(1-\frac1{\delta T}-\frac LT\right)
-\frac1m
-\sqrt{\frac{T}{2m}\kl(\delta,\delta+\varepsilon)}
\ge\beta_\circ.
\label{eq:bracketnumeric}
\end{equation}

The exact regret scale is
\begin{equation}
\frac{gT}{\sqrt{\diam SAT}}
=\frac{\eta\sqrt r\,\alpha}{2(2+u)}
\ge\frac{\eta\sqrt{r_\circ}\alpha_\circ}{2(2+u_\circ)}.
\label{eq:scalenumeric}
\end{equation}
Because $\beta_\circ\ge0$, the lower bounds in
\eqref{eq:bracketnumeric} and \eqref{eq:scalenumeric} can be multiplied. Finally,
\begin{align}
B_h&\le L+\frac1\delta
=\frac{\diam+L}{2}
\le\diam\left(\frac12+\frac1{2d_\circ}\right),\notag\\
\frac{B_h}{\sqrt{\diam SAT}}
&\le
\frac{\frac12+\frac1{2d_\circ}}{\sqrt{C_\circ}\,SA}
\le
\frac{\frac12+\frac1{2d_\circ}}{\sqrt{C_\circ}\,n_\circ}.
\label{eq:biasuniform}
\end{align}
Substitution into Theorem~\ref{thm:lower} proves exactly
\eqref{eq:frontierenvelope}--\eqref{eq:frontierbound}.

For the seven rows of Table~\ref{tab:frontier}, the exact heap diameters are respectively
$L_\circ=6,7,6,7,6,10,10$. Direct substitution into
\eqref{eq:frontieraux}--\eqref{eq:frontierenvelope} gives $0.0152576303$, $0.0153662437$,
$0.0178088547$, $0.0200457783$, $0.0239251643$, $0.0253553343$, and $0.0291205434$.
All four hypotheses of Theorem~\ref{thm:frontier} hold in every row, and rounding each value downward
gives the reported coefficients. This proves Corollary~\ref{cor:lowerimprove}.

For the asymptotic expression, let $m\to\infty$, $u,L/\diam\to0$, and
$T/(\diam SA)\to\infty$. The coefficient at scale $\eta$ becomes
\[
\frac{\sqrt r}{4}\eta\left(\frac12-\frac{\eta}{\sqrt8}\right).
\]
It is maximized at $\eta=1/\sqrt2$. Substituting
$r=(A-3)/(2A)$ gives \eqref{eq:asymptoticconstant}.

\section{Appendix B: Upper-Certificate Proofs}\label{app:upper}

\subsection{B.1 Complete candidate interface and audit ledger}
At the start of episode $k$, let $N_k(s,a)$ be the pre-episode count and let
$\widehat r_k(s,a)$, $\widehat p_k(\cdot\mid s,a)$, and $\widehat v^r_k(s,a)$ be the empirical reward
mean, transition distribution, and reward variance. The reward radius is
\eqref{eq:rewardbonus}. Transition uncertainty is defined over the full simplex, including successors
that have not yet been observed. If $\mathcal H_k$ is the normalized bias region reachable by the
planner, a formal directional confidence set is
\begin{equation}
\begin{aligned}
\Pcal_k(s,a)=\bigl\{q\in\Delta_S:\;&
\left|(q-\widehat p_k)^\top h\right|\\
&\le\beta^p_k(s,a;h),\quad
\forall h\in\mathcal H_k\bigr\}.
\end{aligned}
\label{eq:directionalset}
\end{equation}
A known structural support may be imposed, but empirical support alone may not: zero empirical count is
not evidence of zero transition probability. Equation~\eqref{eq:directionalset} is statistically
meaningful only after proving uniform coverage of the data-dependent region $\mathcal H_k$, and it is
algorithmically useful only after supplying a tractable separation or optimization procedure.

The planning interface $\mathsf{SpanPlan}(\Mcal_k,\bar H,\epsilon_k)$ is required to:
\begin{enumerate}
\item return a feasible extended-MDP policy and bias with span at most $\bar H$;
\item return gain at least $\rho^\star-\epsilon_k$ whenever the true MDP lies in $\Mcal_k$ and
$\bar H\ge\Hcomp$;
\item terminate in polynomial time and report its planning residual.
\end{enumerate}
Naive coordinatewise clipping of ordinary value iteration is not assumed to satisfy these conditions.
The concrete operator must preserve optimism and span feasibility while making
\eqref{eq:directionalset} computationally accessible.

\begin{figure}[t]
\small
\begin{minipage}{0.96\columnwidth}
\textbf{Input:} horizon $T$, confidence $\delta$, and a certified span bound $\bar H\ge\Hcomp$.
\begin{enumerate}
\item Initialize $N_1(s,a)=0$ and the empirical reward and transition statistics.
\item At episode start $t_k$, form the full-simplex confidence model $\Mcal_k$ using
\eqref{eq:rewardbonus}, \eqref{eq:directionalbonus}, and \eqref{eq:directionalset}.
\item Set $\epsilon_k=1/\sqrt{t_k\vee1}$ and compute
\[
(\widetilde\rho_k,\widetilde h_k,\widetilde\pi_k)
\gets\mathsf{SpanPlan}(\Mcal_k,\bar H,\epsilon_k).
\]
\item Execute $\widetilde\pi_k$ until some within-episode count reaches
$N_k(s,a)\vee1$, thereby doubling that state--action count.
\item Update all statistics and begin the next episode.
\end{enumerate}
\end{minipage}
\caption{Candidate span-constrained optimistic learner. The composition analysis is explicit, but a
regret theorem additionally requires the open certificates in Table~\ref{tab:upperledgerfull}.}
\label{alg:cssharp}
\end{figure}

\begin{table*}[t]
\centering
\small
\begin{tabular}{lll}
\toprule
Proof component & Symbol & Status or required value\\
\midrule
Failure allocation & $c_L$ & prove for the chosen uniform event\\
Reward square-root part & $C_r$ & $(2+\sqrt2)/\sqrt2$ proved by doubling\\
Reward lower order & $C'_r,q_r$ & $20/3,\ 2$ proved by doubling\\
Transition square-root part & $C_p$ & open\\
Transition lower order & $C'_p,q_p$ & open\\
Martingale part & $C_m$ & open; coupled to variance control\\
Episode boundaries & $C'_m,q_m$ & open\\
Planning residual & $C_\epsilon$ & open for the selected planner and stopping rule\\
Final leading coefficient & $C_{\rm UB}$ & not claimed\\
\bottomrule
\end{tabular}
\caption{Complete upper-bound audit ledger. A numerical upper coefficient is not reported while any
leading entry remains open. The compact main-paper ledger summarizes these same obligations.}
\label{tab:upperledgerfull}
\end{table*}

\subsection{B.2 Episode-counting identities}

The following deterministic inequalities expose the constants generated by count doubling.

\begin{lemma}[Doubling sums]\label{lem:doubling}
Let an episode start with count $n_k\ge1$ and end after at most $n_k$ new observations of one
state--action pair. If the final count is $N$, then
\begin{align}
\sum_k\frac{v_k}{\sqrt{n_k}}
&\le(2+\sqrt2)\sqrt N,
\label{eq:doublingsqrt}\\
\sum_k\frac{v_k}{n_k}
&\le 1+\log_2 N,
\label{eq:doublinglinear}
\end{align}
where $v_k$ is the within-episode count. The initial zero-count episode contributes at most one and is
included in the displayed constants.
\end{lemma}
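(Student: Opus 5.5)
The plan is to reduce both inequalities to a telescoping argument in the count variable, using only the doubling rule $v_k\le n_k$. Fix the state--action pair. I restrict attention to the episodes with $v_k>0$, since the others contribute nothing, and write $n_{k+1}=n_k+v_k$ for consecutive episodes in which the pair is visited. If $N=0$ both sums are empty and there is nothing to prove. Otherwise I first isolate the initial zero-count episode. Under the stopping rule of Algorithm~\ref{alg:cssharp} (stop when the within-episode count reaches $N_k(s,a)\vee1$), that episode adds at most one observation, and its term is counted as at most one, as the statement stipulates. Every later episode starts with $n_k\ge1$, and these nonzero-count episodes are the ones to which the telescoping below applies.

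For \eqref{eq:doublingsqrt}, the key per-episode inequality is
\[
\sqrt{n_k+v_k}-\sqrt{n_k}=\frac{v_k}{\sqrt{n_k+v_k}+\sqrt{n_k}}\ge\frac{v_k}{(1+\sqrt2)\sqrt{n_k}},
\]
where the last step uses $n_k+v_k\le 2n_k$. Hence $v_k/\sqrt{n_k}\le(1+\sqrt2)(\sqrt{n_{k+1}}-\sqrt{n_k})$. Summing over the nonzero-count episodes telescopes to at most $(1+\sqrt2)\sqrt N$. Adding the initial episode's contribution of at most $1\le\sqrt N$ (valid since $N\ge1$) gives $(2+\sqrt2)\sqrt N$.

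For \eqref{eq:doublinglinear}, I use the elementary bound $x\le\log_2(1+x)$ on $[0,1]$. It holds because the right side is concave and the two sides agree at $x=0$ and $x=1$. With $x=v_k/n_k\in[0,1]$ this gives $v_k/n_k\le\log_2 n_{k+1}-\log_2 n_k$. The sum telescopes to $\log_2(N/n_{\rm first})\le\log_2 N$, since $n_{\rm first}\ge1$. Adding the initial episode's at most $1$ yields $1+\log_2N$.

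The only delicate point is the bookkeeping at the boundary, and it is where I expect the main obstacle to lie. I need the zero-count episode to contribute exactly the stated "at most one" and not to break the chain $n_{k+1}=n_k+v_k$, and I need the first nonzero count to be at least $1$ so that both telescoped lower endpoints are nonnegative. The last episode may be truncated by the horizon $T$, but that only reduces its $v_k$, so the inequalities $v_k\le n_k$ and the telescoping bounds still hold.
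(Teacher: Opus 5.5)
Your proof is correct, but it reaches the bounds by a different route from the paper.

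\textbf{The paper's route.} For the square-root bound, it asserts that the worst case is exact doubling of the count, $1,2,4,\ldots,2^J$. It then sums the geometric series $1+\sqrt2+\cdots+2^{J/2}\le\bigl(\sqrt2/(\sqrt2-1)\bigr)\sqrt N$. For the logarithmic bound, it counts doubling episodes, each contributing at most one, of which there are at most $1+\log_2N$.

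\textbf{Your route.} You use a telescoping potential in the count: $v_k/\sqrt{n_k}\le(1+\sqrt2)(\sqrt{n_k+v_k}-\sqrt{n_k})$ and $v_k/n_k\le\log_2(n_k+v_k)-\log_2 n_k$. Both rest only on $v_k\le n_k$, and you charge the zero-count episode separately.

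\textbf{What each buys.} The paper's argument is shorter and shows where $2+\sqrt2$ comes from, and that exact doubling essentially attains it. Yours is more rigorous and more general:
\begin{itemize}
\item Your square-root inequality is tight exactly when $v_k=n_k$, so it supplies the proof, missing in the paper, that doubling is the worst case.
\item Both of your bounds apply unchanged to episodes in which the pair does not double. This happens when another pair triggers the stopping rule or the horizon cuts the episode short. The paper's count of ``completed doubling episodes'' does not literally cover this case. For instance, counts $2,3,4,\ldots$ with $v_k=1$ give many non-doubling episodes whose contributions $1/2,1/3,\ldots$ still have to be bounded, and your telescoping handles them.
\item Your accounting shows the stated constant has slack. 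If you keep the $-(1+\sqrt2)\sqrt{n_{\mathrm{first}}}$ term you dropped, the whole square-root sum, including the zero-count episode, is at most $(1+\sqrt2)\sqrt N$.
\end{itemize}
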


\begin{proof}
At positive counts, the worst case for \eqref{eq:doublingsqrt} is the geometric sequence
$1,2,4,\ldots,2^J$. Its contributions are $1,\sqrt2,2,\ldots,2^{J/2}$, whose sum is at most
$\sqrt2/(\sqrt2-1)\sqrt N=(2+\sqrt2)\sqrt N$. For \eqref{eq:doublinglinear}, each completed doubling
episode contributes at most one and there are at most $1+\log_2N$ such episodes.
\end{proof}

Summing \eqref{eq:doublingsqrt} over state--action pairs and applying Cauchy--Schwarz gives
\begin{equation}
\sum_{s,a}\sum_k\frac{v_k(s,a)}{\sqrt{N_k(s,a)\vee1}}
\le(2+\sqrt2)\sqrt{S A T}.
\label{eq:globaldoubling}
\end{equation}
This is the correct count calculation for episode-frozen bonuses. A proof based on per-visit bonuses may
replace $2+\sqrt2$ by a smaller harmonic-sum constant, but that is a different algorithm.

\subsection{B.3 Proof of Lemma~\ref{lem:rewardbudget}}

Because rewards lie in $[0,1]$, $\widehat v^r_k(s,a)\le1/4$. Applying
\eqref{eq:globaldoubling} to the square-root term in \eqref{eq:rewardbonus} gives the deterministic bound
\begin{equation}
\sum_{t=1}^T
\sqrt{\frac{2\widehat v^r_{k(t)}(s_t,a_t)L_T}
{N_{k(t)}(s_t,a_t)\vee1}}
\le\frac{2+\sqrt2}{\sqrt2}\sqrt{S A T L_T}.
\label{eq:rewardrootconstant}
\end{equation}
The linear empirical-Bernstein term is bounded using \eqref{eq:doublinglinear}. The final value of
$C'_r$ must also include the zero- and one-observation conventions. For $n_k\ge2$,
$1/(n_k-1)\le2/n_k$, so \eqref{eq:doublinglinear} gives
\begin{align*}
&\sum_{t:N_{k(t)}(s_t,a_t)\ge2}
\frac{7L_T}{3(N_{k(t)}(s_t,a_t)-1)}\\
&\hspace{18mm}\le\frac{14}{3}SA L_T(1+\log_2T).
\end{align*}
For $T\ge2$, $c_L\ge1$, and $\delta\le1$,
$1+\log_2T\le L_T$. The first two observations of each state--action pair contribute at most $2SA$,
which is at most $2SA L_T^2$. Combining these terms with
\eqref{eq:rewardrootconstant} proves \eqref{eq:rewardbudget} with the conservative coefficient
$14/3+2=20/3$.

\subsection{B.4 Regret decomposition}

Let $k(t)$ denote the episode containing time $t$. On the event of Assumption~\ref{lem:optimism},
\begin{align}
\Rcal_T
&=\sum_{t=1}^T(\rho^\star-R_t)\notag\\
&\le\sum_{t=1}^T(\widetilde\rho_{k(t)}-R_t)
+\sum_k\epsilon_k\ell_k,
\label{eq:optimisticdecomp}
\end{align}
where $\ell_k$ is the episode length. Insert the extended Bellman equation returned by the planner and
add and subtract the empirical reward, the true transition expectation of $\widetilde h_k$, and the
realized next-state bias. This separates \eqref{eq:optimisticdecomp} into:
\begin{enumerate}
\item reward estimation, bounded by Lemma~\ref{lem:rewardbudget};
\item directional transition estimation, conditionally bounded by
Assumption~\ref{lem:variancebudget};
\item the martingale
$\sum_t[p(\cdot\mid s_t,a_t)\widetilde h_{k(t)}-\widetilde h_{k(t)}(s_{t+1})]$;
\item within-episode telescoping and episode-boundary changes of $\widetilde h_k$;
\item the planning residual $\sum_k\epsilon_k\ell_k$.
\end{enumerate}
The last three contributions are conditionally bounded by Assumption~\ref{lem:remainder}. This
derivation explains why the
coefficient is a sum of ledger entries rather than the product of the $\sqrt2$ in a local Bernstein
radius and an asserted factor two from optimism.

\subsection{B.5 Proof of Proposition~\ref{thm:upper}}

Apply Lemma~\ref{lem:rewardbudget} and Assumptions~\ref{lem:optimism},
\ref{lem:variancebudget}, and \ref{lem:remainder} to
\eqref{eq:optimisticdecomp}. Since $\bar H\ge1$ and $S,A\ge1$,
\begin{align*}
\sqrt{S A T L_T}&\le\sqrt{\bar H S A T L_T},\\
\sqrt{\bar H T L_T}&\le\sqrt{\bar H S A T L_T}.
\end{align*}
Collecting the three square-root coefficients gives
$C_{\rm UB}=C_r+C_p+C_m$. Collecting the linear and polylogarithmic terms gives the stated
$C_{\rm low}$ and $q$. No term is discarded in the finite certificate.

\subsection{B.6 Why Assumption~\ref{lem:variancebudget} remains open}

The difficult upper-bound step is the adaptive directional-variance budget. A complete instantiation
must prove, with numerical constants:
\begin{enumerate}
\item a confidence inequality uniform over every bias vector reachable by the planning operator;
\item a comparison between empirical and true conditional variance;
\item a trajectory-level variance recursion that yields square-root dependence on $\bar H$, rather than
the crude range bound $\bar H\sqrt{S A T}$;
\item control of changes in the optimistic bias across episodes;
\item preservation of optimism and span feasibility by the selected planning operator.
\end{enumerate}
These are precisely the steps for which earlier span-constrained and mitigated planning analyses are
needed \citep{fruit2018efficient,boone2024achieving}. Until their numerical constants populate
Table~\ref{tab:upperledger}, the manuscript reports an audit template rather than the unsupported
value $2\sqrt2$.

\section{Appendix C: Preliminary Pilot Results}\label{app:pilot}

This appendix preserves the original experimental observations while separating them from the
confirmatory tests in Appendix~D. They motivated the revised protocol but are not used to
validate the explicit theorem constants.

\subsection{C.1 Bonus-shape pilot}

Under the preliminary span-truncated implementation, Bernstein had mean regret $7599.8$ and Hoeffding
had mean regret $8370.4$ after aggregation across two families, three diameters, and several horizons.
The family-specific means were $11051.6$ versus $12212.6$ on the cycle and $4148.0$ versus $4528.2$ on
the funnel. Under a simplified implementation without the same span and support treatment, the ordering
reversed: $3196.3$ for Bernstein and $3034.6$ for Hoeffding. Because these aggregates used only six runs,
lacked paired confidence intervals, and did not match the pilot and full horizons, they are descriptive.

\begin{table}[t]
\centering
\small
\resizebox{\columnwidth}{!}{%
\begin{tabular}{lccc}
\toprule
Pilot aggregate & Bernstein & Hoeffding & Interpretation\\
\midrule
Overall & $7599.8$ & $8370.4$ & descriptive\\
Cycle & $11051.6$ & $12212.6$ & descriptive\\
Funnel & $4148.0$ & $4528.2$ & descriptive\\
Simplified pilot & $3196.3$ & $3034.6$ & unmatched horizon\\
\bottomrule
\end{tabular}}
\caption{Preserved preliminary bonus results. Confirmatory results belong in Table~\ref{tab:bonus}.}
\label{tab:pilotreward}
\end{table}

The original cycle horizon points were $(25{,}000,2510.9)$, $(100{,}000,11923.2)$, and
$(200{,}000,18720.7)$ for Bernstein, and $(25{,}000,2620.8)$,
$(100{,}000,12708.1)$, and $(200{,}000,21308.9)$ for Hoeffding. Over the eightfold increase in horizon,
regret increased by factors $7.46$ and $8.13$, respectively, whereas a visible square-root regime would
increase by $\sqrt8\approx2.83$. The revised experiment therefore includes longer horizons and estimates
the local log--log slope instead of describing this pilot as rate confirmation.

\subsection{C.2 Zero-span diameter pilot}

For diameters $6.25$, $12.5$, and $25.0$, the reported normalized ratios
$\Rcal_T/\sqrt{\diam S A T}$ were $2.458$, $2.173$, and $1.733$ for the span-truncated learner and
$2.305$, $1.564$, and $1.221$ for the simplified learner. The corresponding coefficients of variation
were $0.1406$ and $0.2668$. A decreasing normalized ratio is compatible with diameter-independent
regret and should not be called flattening. Moreover, multiplying the ratios by $\sqrt\diam$ shows that
the implied regret scale did not remain constant for the span-truncated pilot. Table~\ref{tab:zerospan}
therefore tests raw regret against diameter directly.

On the deep-funnel family, the normalized-ratio coefficients of variation were $0.2417$ and $0.2381$.
The earlier text described this as a null despite $\spn(h^\star)/\diam\approx0.04$. Because this ratio is
within the regime where a width effect was expected, the null is evidence against a broad mechanism
claim and motivates the larger positive-span sweep.

\subsection{C.3 Fixed-diameter span pilot}

The cleanest preliminary result held the measured diameter at $27.3$ while varying the bias span. Over
$24$ seeds at $T=40{,}000$, the regret ratio between the diameter-width and span-width learners generally
increased with $\diam/\spn(h^\star)$.

\begin{table}[t]
\centering
\small
\resizebox{\columnwidth}{!}{%
\begin{tabular}{lccc}
\toprule
$\diam/\spn(h^\star)$ & Span-width regret & Diameter-width regret & Penalty\\
\midrule
$2.5$ & $6603\pm20$ & $11731\pm16$ & $1.78\times$\\
$5.0$ & $3375\pm10$ & $7702\pm8$ & $2.28\times$\\
$9.2$ & $2071\pm104$ & $4889\pm8$ & $2.36\times$\\
$13.4$ & $1419\pm7$ & $3721\pm6$ & $2.62\times$\\
\bottomrule
\end{tabular}}
\caption{Preserved fixed-diameter pilot, reported as mean $\pm$ standard error. The confirmatory study
adds all eight points, simultaneous uncertainty, and model comparisons against
$\sqrt{\diam/\Hcomp}$ and $\diam/\Hcomp$.}
\label{tab:pilotspan}
\end{table}

The earlier proxy $\Rcal_T/\sqrt{\spn(h^\star)S A T}$ was between $0.73$ and $1.48$. It is not compared
directly with an upper-bound coefficient because the theorem also contains a logarithmic factor and
lower-order terms.

\subsection{C.4 Function-approximation negative}

A neural differential-value $Q$-learning pilot compared the two bonus shapes over eight seeds and
$40{,}000$ steps. Neither method reliably solved the procedural funnel against a persistent-right
reference rate of $0.99$. Bernstein attained mean reward near $2\times10^{-4}$; the Hoeffding mean of
$0.075$ was driven by one seed at $0.596$, with the other seven near $10^{-4}$. The paired difference was
not significant ($t=-1.0$). This negative result establishes no ordering between the tabular bonuses and
is retained only as a boundary on the scope of the current study.

\FloatBarrier
\section{Appendix D: Confirmatory Experimental Details}\label{app:experiments}

The experiments test mechanisms implied by the corrected analysis; they do not estimate a theorem
constant by dropping its logarithmic or lower-order terms. All random-seed comparisons are paired, all
main tables report confidence intervals, and all algorithmic comparisons use identical horizons and
stopping rules. Code records the supplied span bound, planning residual, confidence-set failures in
simulation, and every constant used by the implementation.

\subsection{Experimental protocol}

RQ1 used 12 seeds, RQ2 used at least 8 seeds per cell, and RQ3 used
24 paired seeds at \(T=40{,}000\). RQ4 used 8 seeds for each
algorithmic alternative, and the deep neural pilot used 8 seeds.
NumPy experiments used \texttt{default\_rng} with matched seeds for
paired comparisons. Transition and trajectory outputs were recorded
using SHA-1 digests, and the GPU experiments additionally fixed the
corresponding PyTorch random seeds. The complete experimental matrix
is reported in Table~\ref{tab:protocol}.

\subsection{Environments and baselines}

We use \emph{five} distinct controlled tabular families across the experiments, and we enumerate them
separately because different research questions use different ones.
\emph{(F1) Exact composite family $\mathfrak M$} from the main paper's finite lower-certificate section: the proved lower-bound
construction (binary-tree navigation plus statistical actions), used only in RQ4.
\emph{(F2) Zero-span hard-chain family}: the hard-cycle topology with a constant optimal bias, used as
the diameter-flatness testbed in RQ1.
\emph{(F3) Zero-span hub family}: a genuinely distinct hub/star topology, also with a constant optimal
bias $h^\star$ verified by relative value iteration ($\spn(h^\star)=0$ to machine precision and a
constant Bellman residual, \texttt{verify\_rq2\_clean.py}), introduced in RQ2 to replace an earlier
zero-span builder that aliased F2; it is \emph{not} $\mathfrak M$.
\emph{(F4) Positive-span deep funnel}.
\emph{(F5) Fixed-diameter corridor} in which rewards vary while transitions remain fixed, allowing
$\diam/\Hcomp$ to change without changing the diameter.
F2 and F3 both have $\spn(h^\star)=0$ but distinct transition tensors (distinct integrity-gate hashes,
Table~\ref{tab:envintegrity}). Each family is evaluated over sweeps in $S$, $A$, $T$, and either $\diam$
or $\Hcomp$.

All experimental learners share the same codebase and episodic planning routine; the controlled
configurations vary the bonus type (empirical-Bernstein or Hoeffding), the supplied bias-clip width
(span or diameter), and the confidence specification exactly as stated in each table, and the same
implementation is used across all $m$ alternatives in RQ4. The planning routine is a \emph{heuristic}
span-clipped extended value iteration: it truncates the optimistic bias to the supplied width
$\Hcomp^\star$ under known structural support with the zero-initialization / first-visit rule of
\texttt{verify\_spantrunc.py}. We do \emph{not} claim it is a certified span-constrained operator---as
the main paper's upper-certificate section notes, na\"ive clipping is not known to preserve optimism or convergence, and a
sound treatment needs a ScOpt/PMEVI-style projected operator
\citep{fruit2018efficient,boone2024achieving}---so RQ1--RQ3 are reported as \emph{mechanism diagnostics}
of this heuristic routine, not as evaluations of a theoretically sound algorithm. Because the published
baselines (\texttt{SCAL}, empirical-Bernstein UCRL, and PMEVI-DT) are likewise not re-implemented
faithfully, RQ1's comparison columns are this same heuristic learner run under different clip widths and
confidence radii rather than independent ports; we label them \emph{mitigated span-width variant} and
\emph{UCRL2-style} rather than by the published algorithm names, and cite
\citep{jaksch2010near,fruit2018efficient,fruit2020improved,bourel2020tightening,boone2024achieving} for
conceptual comparison only. When a configuration requires side information, the supplied information is
stated in the table rather than hidden in the implementation.

\FloatBarrier
\subsection{RQ1: Does regret become diameter-independent on a zero-span family?}

Because $\Hcomp=1$ on this family, a genuinely span-dependent leading term should not grow with
$\diam$. Proposition~\ref{thm:upper} does not prove that claim for the candidate learner; it only shows
how such a claim would enter a completed ledger. The experiment therefore tests the mechanism directly.
The primary response is regret itself as a function of $\diam$, with $S$, $A$, and $T$ fixed. A
coefficient of variation of $\Rcal_T/\sqrt{\diam S A T}$ is not used: under diameter-independent regret,
that normalized quantity should decay approximately as $1/\sqrt\diam$.

\begin{table*}[t]
\centering
\small
\resizebox{\textwidth}{!}{%
\begin{tabular}{lcccc}
\toprule
$\diam$ & Span-constrained & Diameter-width & Mitigated span-width & UCRL2-style\\
\midrule
$6.25$ & $2003\pm356$ & $2003\pm356$ & $2003\pm356$ & $2020\pm282$\\
$10.0$ & $2514\pm242$ & $2518\pm238$ & $2514\pm242$ & $1978\pm177$\\
$16.7$ & $2088\pm181$ & $2032\pm187$ & $2088\pm181$ & $2046\pm248$\\
$25.0$ & $2088\pm182$ & $2066\pm184$ & $2088\pm182$ & $2128\pm170$\\
\bottomrule
\end{tabular}}
\caption{Mean regret and $95\%$ confidence intervals on the zero-span hard-chain family (F2) at a
matched horizon ($S=6$, $A=3$, $T=20{,}000$, $12$ seeds). All four columns are the \emph{same}
span-constrained optimistic learner run under different bias-clip widths---the span width
$\max(\spn(h^\star),1)$, the diameter width, the ``mitigated'' span-width variant (identical to the span
column on this zero-span family, see text), and a UCRL2-style diameter-driven confidence radius---not
independent ports of the published algorithms, whose names we therefore do not use as column labels. The
main test is the slope of regret against measured diameter.}
\label{tab:zerospan}
\end{table*}

Pointwise confidence intervals do not determine whether the fitted slope is significantly positive.
The paired seed-level regression in Table~\ref{tab:zeroslopes} is therefore the inferential result. The
paper will use ``fail to detect positive diameter dependence'' only when the corresponding upper
confidence limit includes zero; it will not interpret a nonsignificant slope as proof that the true
slope is zero.

\begin{table}[t]
\centering
\small
\resizebox{\columnwidth}{!}{%
\begin{tabular}{lccc}
\toprule
Learner & Slope estimate & Paired $95\%$ CI & One-sided $p$\\
\midrule
Span-constrained & $-5.9$ & $\pm17.0$ & $0.75$\\
Diameter-width & $-7.8$ & $\pm18.3$ & $0.79$\\
Mitigated span-width & $-5.9$ & $\pm17.0$ & $0.75$\\
UCRL2-style & $+6.8$ & $\pm16.5$ & $0.22$\\
\bottomrule
\end{tabular}}
\caption{Required paired slope analysis for RQ1. Confidence intervals and one-sided tests must be
computed from seed-level regressions; they cannot be inferred from the pointwise intervals in
Table~\ref{tab:zerospan}.}
\label{tab:zeroslopes}
\end{table}

The span-constrained learner and the mitigated span-width variant return \emph{bit-for-bit identical}
means and slopes ($2003/2514/2088/2088$; slope $-5.9$) on this family for a deductive reason, not an
implementation coincidence: when $\spn(h^\star)=0$ both clip the optimistic bias to the same width
$\max(\spn(h^\star),1)=1$ and run the identical Bernstein, support-restricted recipe (see the shared call
signature in \texttt{verify\_rq12.py}), so they are literally the \emph{same} learner here. The
diameter-width learner differs only in supplying the wider clip $\diam$; at $\diam=6.25$ its regret
coincides exactly ($2003$) with the span learner's, whereas by $\diam\ge10$ its outputs cease to be
bit-for-bit identical. We are careful not to overread this: identical outputs at $\diam=6.25$ show the
two configurations produced the same numerical trajectory there, and non-identical outputs at
$\diam\ge10$ show only that the different supplied width \emph{changes the numerical trajectory}---at
$\diam=10$ the two means ($2514$ vs.\ $2518$) differ by only $4$ regret units against a $\pm240$ seed
uncertainty. We do \emph{not} claim this establishes that the diameter clip has ``begun to bind'' or has
any regret consequence; the slope test (Table~\ref{tab:zeroslopes}) remains non-significant for both
widths. Attributing the divergence to clip activation would require the diagnostics we do not yet
log---the fraction of planning calls where clipping changes the bias, the action-disagreement rate, and
the maximum preclip span---which we flag as the correct follow-up. RQ3 isolates the width intervention on a
corridor where the clipping constraint is active by construction, so the two widths cannot alias.

\subsection{RQ2: Which ingredient produces the improvement?}

We use a $2\times2$ design crossing Bernstein versus Hoeffding confidence with span-constrained versus
diameter-constrained planning. All cells in this design use \emph{known structural support}: the reachable
successor set of each $(s,a)$ is supplied, matching the support-restricted recipe used throughout the
experiments and avoiding the exact-soundness pitfall of empirical-support estimation that
the main paper's upper-certificate section warns about. We therefore do \emph{not} report a separate unknown-support-safe
panel here; an empirical-support-only variant is not a theoretically valid baseline and is not run.

A previous RQ2 export gave bit-for-bit identical regret values for the hard and zero-span families at
every cell, because the zero-span builder aliased the hard cycle---reusing a reward-per-action builder
cannot imply identical regret under distinct transition kernels, so those numbers were not interpretable
as an algorithmic result. We therefore replaced the zero-span family with a genuinely distinct \emph{hub}
topology (still $\spn(h^\star)=0$) and gated the rerun on the environment audit in
Table~\ref{tab:envintegrity}, which now verifies distinct transition tensors and seed-matched trajectory
digests. The gate passes, and Table~\ref{tab:bonus} reports the factorial from that clean run.

\begin{table*}[t]
\centering
\small
\begin{tabular}{llccccc}
\toprule
$(S,A)$ & Family & Transition-tensor hash & Measured $\diam$ & $\spn(h^\star)$ & $\rho^\star$ & Seed-0 trajectory digest\\
\midrule
\multirow{4}{*}{$(10,5)$} & Baseline composite topology & \texttt{70019407} & $22.5$ & $0.0$ & $0.500$ & \texttt{ab4379fb}\\
 & Zero-span hub & \texttt{5c2aa508} & $1540.5$ & $0.0$ & $0.500$ & \texttt{aaa7fbb7}\\
 & Deep funnel & \texttt{70019407}$^\dagger$ & $22.5$ & $1.0$ & $0.550$ & \texttt{85324fcb}\\
 & Fixed-diameter corridor & \texttt{b2510840} & $18.6$ & $5.25$ & $0.582$ & \texttt{7aa53b57}\\
\midrule
\multirow{4}{*}{$(20,8)$} & Baseline composite topology & \texttt{4fb01eb1} & $47.5$ & $0.0$ & $0.500$ & \texttt{1eabfc15}\\
 & Zero-span hub & \texttt{142f856f} & $2156.6$ & $0.0$ & $0.500$ & \texttt{f58cd5f2}\\
 & Deep funnel & \texttt{4fb01eb1}$^\dagger$ & $47.5$ & $2.25$ & $0.550$ & \texttt{583a99d5}\\
 & Fixed-diameter corridor & \texttt{4aa4b019} & $31.1$ & $8.78$ & $0.582$ & \texttt{1113e71d}\\
\bottomrule
\end{tabular}
\caption{Environment-integrity gate for RQ2 (\texttt{verify\_rq2\_clean.py}), reported at \emph{both}
factorial configurations $(S,A)\in\{(10,5),(20,8)\}$ (each hash/diameter/digest is one specific MDP, not
an aggregate). The
column is the raw bias span $\spn(h^\star)$, \emph{not} the complexity $\Hcomp=1\vee\spn(h^\star)$ (which
is by definition at least $1$); the first two families genuinely have $\spn(h^\star)=0$ hence
$\Hcomp=1$. Hashes are SHA1 of the canonical transition tensor $P$; trajectory digests use a fixed
action sequence and random stream (and so also reflect the reward). The gate \emph{passes at both
configurations}: at each, the baseline and zero-span rows have distinct transition hashes
(\texttt{70019407} vs.\ \texttt{5c2aa508} at $(10,5)$; \texttt{4fb01eb1} vs.\ \texttt{142f856f} at
$(20,8)$) and distinct digests, fixing the earlier defect where the zero-span builder aliased the
baseline cycle.
This ``baseline composite topology'' is the diameter-flatness test bed used in RQ1 below;
it is \emph{not} the exact lower-bound family $\mathfrak M$ in the main paper (which has
$\spn(h_i)=\rho_iL+(1-\rho_i)/\delta>0$), and we do not claim it is. The zero-span hub is a genuinely
different \emph{hub} topology with $\spn(h^\star)=0$; its larger measured diameter reflects the
harder-to-reach spokes. $^\dagger$The deep funnel shares the baseline cycle's transition tensor by
construction (it changes only the reward to concentrate value in a core), so it has the same $P$-hash but
a distinct trajectory digest and a nonzero bias span.}
\label{tab:envintegrity}
\end{table*}

\begin{table*}[t]
\centering
\small
\resizebox{\textwidth}{!}{%
\begin{tabular}{llcccccc}
\toprule
Bonus & Width & Baseline topo.\ & Zero span & Funnel & Corridor & Overall & Paired difference\\
\midrule
Bernstein & certified span & $16484$ & $17007$ & $5301$ & $22248$ & $15260$ & \multirow{2}{*}{$-520\pm523$}\\
Hoeffding & certified span & $16484$ & $17552$ & $5408$ & $23675$ & $15780$ & \\
Bernstein & diameter & $16757$ & $16712$ & $5301$ & $28488$ & $16814$ & \multirow{2}{*}{$+286\pm431$}\\
Hoeffding & diameter & $16100$ & $17222$ & $5408$ & $27384$ & $16528$ & \\
\bottomrule
\end{tabular}
}
\caption{Clean matched-horizon factorial (configs $(S,A)\in\{(10,5),(20,8)\}$,
$T\in\{25{,}000,100{,}000,200{,}000\}$, $8$ seeds; \texttt{verify\_rq2\_clean.py}, run after the
integrity gate of Table~\ref{tab:envintegrity} passes). Per-family cells are mean regret over
(config, horizon, seed); the final column is the paired Bernstein$-$Hoeffding contrast (fixed as the
primary contrast before the clean rerun) at matched width over $192$ samples ($95\%$ CI). With the distinct hub zero-span family the hard and
zero-span cells now \emph{differ} (e.g.\ $16484$ vs.\ $17007$ at Bernstein/span), unlike the earlier
aliased export. The contrast is not significant at either width ($-520\pm523$ span, $+286\pm431$
diameter); Bernstein has lower regret on $3$ of $4$ families at the span width and $2$ of $4$ at the
diameter width. This is a genuinely mixed, width-dependent result and we do not claim a bonus-shape
improvement.}
\label{tab:bonus}
\end{table*}

This contrast---the paired Bernstein$-$Hoeffding difference at a fixed width, fixed before the clean
rerun---is reported per width without pooling. At both widths the contrast's $95\%$ CI includes zero ($-520\pm523$ at the
span width, $+286\pm431$ at the diameter width), and the per-family direction is mixed (Bernstein lower
on $3/4$ families at span width, $2/4$ at diameter width). RQ2 therefore supports \emph{no} claim of a
bonus-shape improvement; we report the null straight. The earlier six-run mixed result remains in
Supplementary Appendix~C as a pilot and is not pooled with this confirmatory table.

\subsection{RQ3: How does an over-wide span constraint affect regret?}

The fixed-diameter corridor holds $\diam$ constant and varies $\Hcomp$. Two learners differ only in the
width supplied to the heuristic span-clipped planning routine described at the start of Appendix~D; this is a
mechanism diagnostic, not a claim about a certified operator. We compare the observed penalty with both
$\sqrt{\diam/\Hcomp}$, the prediction for a leading square-root term, and $\diam/\Hcomp$, the possible
prediction for a width-linear lower-order term.

The preliminary sweep used $\diam=27.3$, $T=40{,}000$, and $24$ seeds over four width ratios. The
penalty increased monotonically from $1.77\times$ at $\diam/\spn(h^\star)=2.5$ to $2.62\times$ at a
ratio of $13.4$ (Table~\ref{tab:spanscale}). We retain it as a four-point pilot establishing direction
and rough magnitude. The eight-point sweep of Table~\ref{tab:spanconfirm} then adds four intermediate
ratios; the model form (square-root vs.\ linear) and analysis were \emph{fixed before this rerun}, but we
note that the four pilot ratios ($2.5,5.0,9.1,13.4$) are reused among the eight, so this is not a fully
independent confirmatory experiment and we describe it as an extended sweep rather than a preregistered
replication. On this grid the square-root specification $a+b\sqrt{x}$ is
\emph{preferred to} the linear specification $a+bx$: the bootstrap interval for the difference in fit
error is $0.026$ $[0.025,0.027]$. Each bootstrap replicate (i) resamples paired seeds within each
width ratio, (ii) recomputes the regret ratios, (iii) refits both models, and (iv) recomputes the RMSE
difference. The reproducible analysis is recorded in \texttt{span\_confirm\_modelselect.py}. Leave-one-ratio-out prediction error likewise
favors the square-root model ($0.098$ vs.\ $0.130$ mean held-out absolute error). We are careful about
what this establishes: it shows the square-root form fits these eight design points better than a linear
form, \emph{not} that the theoretical scaling law is square-root rather than logarithmic, saturating, or
another sublinear form---distinguishing those would require a wider ratio range and a model-selection
argument we do not claim.

\begin{table*}[t]
\centering
\small
\begin{tabular}{lcccc}
\toprule
$\diam/\Hcomp$ & Span-width regret & Diameter-width regret & Penalty & $95\%$ CI\\
\midrule
$2.5$ & $6597\pm20$ & $11703\pm14$ & $1.77\times$ & $\pm0.011$\\
$5.0$ & $3394\pm12$ & $7705\pm8$ & $2.27\times$ & $\pm0.016$\\
$9.1$ & $1964\pm7$ & $4889\pm8$ & $2.49\times$ & $\pm0.018$\\
$13.4$ & $1425\pm7$ & $3729\pm7$ & $2.62\times$ & $\pm0.027$\\
\bottomrule
\end{tabular}
\caption{Four-point fixed-diameter span-width pilot ($24$ seeds). The two learners share the same
bonus, confidence set, planner, seeds, and horizon; the penalty grows monotonically with the
width ratio but the four points do not discriminate a square-root from a linear-width shape.}
\label{tab:spanscale}
\end{table*}

\begin{table*}[t]
\centering
\small
\resizebox{\textwidth}{!}{%
\begin{tabular}{lcccccc}
\toprule
$\diam/\Hcomp$ & Span-width regret & Diameter-width regret & Paired penalty & Simultaneous $95\%$ CI & $\sqrt{\cdot}$ residual & Linear residual\\
\midrule
$2.5$ & $6597$ & $11703$ & $1.774$ & $\pm0.016$ & $-0.153$ & $-0.204$\\
$3.1$ & $5441$ & $10703$ & $1.967$ & $\pm0.018$ & $-0.024$ & $-0.051$\\
$4.0$ & $4189$ & $9042$ & $2.159$ & $\pm0.018$ & $+0.075$ & $+0.077$\\
$5.0$ & $3394$ & $7705$ & $2.270$ & $\pm0.022$ & $+0.098$ & $+0.120$\\
$6.7$ & $2602$ & $6203$ & $2.384$ & $\pm0.020$ & $+0.082$ & $+0.118$\\
$9.1$ & $1964$ & $4889$ & $2.489$ & $\pm0.025$ & $+0.025$ & $+0.056$\\
$11.2$ & $1638$ & $4204$ & $2.567$ & $\pm0.032$ & $-0.017$ & $-0.006$\\
$13.4$ & $1425$ & $3729$ & $2.616$ & $\pm0.038$ & $-0.087$ & $-0.110$\\
\midrule
\multicolumn{5}{r}{Linear RMS $-$ square-root RMS:} & \multicolumn{2}{c}{$0.026$ [paired-seed bootstrap $95\%$ CI: $0.025,0.027$]}\\
\bottomrule
\end{tabular}}
\caption{Extended span-width sweep ($\diam=27.3$ fixed, $T=40{,}000$, $24$ seeds;
\texttt{span\_scaling\_gpu.py}); model form and analysis fixed before this rerun, but four of the eight
ratios are reused from the pilot (see text), so this is an extended sweep, not a fully independent
confirmatory replication. Columns are span-width and diameter-width mean regret, their paired
penalty (ratio), the simultaneous $95\%$ CI (Bonferroni across the eight rows, $z=2.73$), and the
residuals of a $a+b\sqrt{\diam/\Hcomp}$ versus $a+b(\diam/\Hcomp)$ fit. The penalty grows monotonically
and \emph{sublinearly} from $1.77\times$ to $2.62\times$; the square-root fit has the smaller RMS
residual ($0.083$ vs.\ $0.108$). The paired-seed bootstrap interval for the RMS difference,
$0.026$ $[0.025,0.027]$, \emph{excludes zero}, so the square-root shape is preferred over the linear one
at this scale; leave-one-ratio-out mean absolute error is $0.098$ versus $0.130$.}
\label{tab:spanconfirm}
\end{table*}

\subsection{RQ4: Does observed regret respect the finite lower certificate?}

We run the exact composite family $\mathfrak M$ and place observed regret against the finite lower
certificate of \eqref{eq:lowercertificate}. The proof first lower-bounds the uniform average over all
$m$ alternatives and only then uses that a maximum is at least an average. Consequently, a run on one
unspecified alternative is not a test of the certificate. For each feasible configuration we therefore
run every alternative with the same seed set and report both the alternative-average regret and the
largest alternative-specific mean. No upper column is reported: $C_p$ and $C_m$ remain open in
the main paper's upper-certificate ledger, so a sum of selected completed terms is not an upper certificate.

\begin{table*}[t]
\centering
\small
\resizebox{\textwidth}{!}{%
\begin{tabular}{cccccccc}
\toprule
$(S,A)$ & $(\diam,T)$ & $L$ & $m$ & Best grid-evaluated certificate & Alternative-average regret [95\% CI] & Empirical max [95\% CI] & Certificate $\le$ average\\
\midrule
$(10,5)$ & $(20,\,10^{5})$ & $3$ & $10$ & $102.1$ & $31882$ $[28540,35518]$ & $33699\pm17713$ & yes\\
$(16,5)$ & $(30,\,2\!\times\!10^{5})$ & $5$ & $16$ & $276.4$ & $81157$ $[77015,85133]$ & $85155\pm26899$ & yes\\
$(20,7)$ & $(40,\,4\!\times\!10^{5})$ & $5$ & $40$ & $883.8$ & $39551$ $[38792,40174]$ & $40555\pm1774$ & yes\\
\bottomrule
\end{tabular}
}
\caption{Lower-certificate evaluation on the \emph{exact} proved family $\mathfrak M$ of
the main paper's exact lower-bound family (\texttt{verify\_rq4\_exact.py}; all $m$ alternatives, $8$ paired seeds each).
The generator implements the construction faithfully: $A_0=A-3$ statistical actions plus three
navigation actions (parent/left/right, self-loop if absent) on the complete binary tree, $L$ the
\emph{exact} tree diameter (all-pairs shortest path), $\delta=2/(\diam-L)$, and $m=K(A-3)=S(A-3)/2$; the
resulting $(L,m)$ are $(3,10),(5,16),(5,40)$, matching the construction, and the certificates $102.1$,
$276.4$, $883.8$ are the \emph{best grid-evaluated} finite certificate---the maximum of the right-hand
side of \eqref{eq:lowercertificate} over a log-spaced $\varepsilon$ grid (range
$[\delta\!\cdot\!10^{-4},\delta]$, $3000$ points, exact KL); every grid value is itself a valid
certificate, so the reported entry is a valid lower bound but not certified to be the exact maximum.
These are not rounded frontier coefficients. (An earlier draft used a generator with
$A_0=(A-1)/2$ statistical actions, $L=2\lceil\log_2K\rceil$, and no navigation actions---i.e.\ not the
proved family---which we have replaced.) Because the $m$ alternatives are the \emph{complete fixed
population} of testing coordinates rather than a random sample, alternative-average regret is the mean
over all $m$ of them and its $95\%$ CI comes from a seed-only bootstrap that holds the alternative
population fixed (resampling seeds within each fixed alternative); the empirical maximum uses a
Bonferroni-simultaneous alternative-wise bound. In every configuration the certificate lies below the
alternative average---the expected floor direction---but at only $0.32\%$, $0.34\%$, and $2.2\%$ of it
($102.1/31882$, $276.4/81157$, $883.8/39551$), so RQ4 is an \emph{implementation-consistency and
reproducibility check}, not evidence of practical tightness or nonvacuity.}
\label{tab:sandwich}
\end{table*}

Observed alternative-average regret above the finite certificate is the expected direction---the lower
bound is a floor---so RQ4 is only an implementation-consistency and reproducibility check, not a proof of
tightness or of practical nonvacuity (the certificate is $0.3$--$2.2\%$ of the observed regret). A
failed inequality after accounting for Monte Carlo uncertainty would indicate an implementation or
certificate-evaluation error. A genuine lower--upper sandwich awaits a concrete upper algorithm that
certifies every open entry of Table~\ref{tab:upperledger}, in particular the directional-variance and
planning constants.

\subsection{Scaling, uncertainty, and span misspecification}

The horizon sweep includes enough points to estimate the local log--log slope of regret against $T$.
The state and action sweeps check the claimed $\sqrt{SA}$ dependence. We additionally supply
$\bar H/\Hcomp\in\{1,1.25,1.5,2,4,\diam/\Hcomp\}$ to measure the cost of conservative span knowledge.
Every family uses at least the number of seeds in Table~\ref{tab:protocol}; these counts were fixed
before inspecting the final comparisons (we do not report a formal power calculation and do not claim
one). Multiplicity is corrected \emph{within} each analysis where several statistics are read together,
not by a single family across all research questions: RQ3's simultaneous CIs use a Bonferroni factor
across its eight width ratios ($z=2.73$), and RQ4's empirical-maximum bound is Bonferroni-simultaneous
across its $m$ alternatives; RQ1 and RQ2 each report a single primary paired contrast, so no correction
applies there.

\begin{table*}[t]
\centering
\small
\begin{tabular}{p{3.2cm}p{12.2cm}}
\toprule
Protocol item & Values (fixed before final comparisons)\\
\midrule
Horizons & $T\in\{25{,}000,100{,}000,200{,}000\}$; RQ3 at $40{,}000$; RQ4 as in Table~\ref{tab:sandwich}\\
State counts & $S\in\{6,10,16,20\}$; RQ4 uses $S\in\{10,16,20\}$\\
Action counts & $A\in\{3,5,7\}$; RQ4 uses $A\in\{5,7\}$\\
Diameters & $\diam\in\{6.25,10.0,16.7,25.0\}$ (RQ1); $27.3$ fixed (RQ3)\\
Span ratios & $\diam/\Hcomp\in\{2.5,3.1,4.0,5.0,6.7,9.1,11.2,13.4\}$ (RQ3; $\{2.5,5.0,9.1,13.4\}$ reused from pilot)\\
Seeds per cell & $\ge8$ (RQ2); $12$ (RQ1); $24$ (RQ3); $8$ per alternative (RQ4)\\
Primary paired contrasts & Bernstein$-$Hoeffding at matched width (RQ2); span$-$vs diameter$-$width (RQ1,RQ3)\\
Multiplicity correction & within-analysis: $8$ ratios (RQ3, $z{=}2.73$); $m$ alternatives (RQ4). RQ1/RQ2: single primary contrast\\
\bottomrule
\end{tabular}
\caption{Experimental matrix and protocol, fixed before the final
comparisons (not timestamped or preregistered).}
\label{tab:protocol}
\end{table*}

The earlier neural function-approximation probe is retained as a negative pilot in
Supplementary Appendix~C. It is not used to support the tabular theorem because neither bonus solved the
deep exploration problem reliably.

\section{Appendix E: Reproducibility Details}

\paragraph{Compute environment.}
The principal theoretical and numerical verification experiments
(RQ1, RQ2, and RQ4) were conducted on CPU using NumPy-based
implementations. GPU computation was used only for the RQ3
span-scaling experiment and the deep neural function-approximation
diagnostic. These GPU experiments were executed on AWS SageMaker
\texttt{ml.g5.2xlarge} instances equipped with one NVIDIA A10G GPU
with 24\,GB of GPU memory and a 60\,GB attached storage volume. The
corresponding GPU workflows are
\texttt{span\_scaling\_gpu.py} and the deep neural
function-approximation rerun.

The final artifact reports the following items alongside code and raw data:
\begin{enumerate}
\item the exact transition and reward kernels for every environment;
\item independently computed diameter, gain, bias, and bias span;
\item every numerical confidence and planning constant;
\item the full construction and upper-bound ledgers;
\item all seeds and per-seed trajectories;
\item paired confidence intervals and the analysis script fixed before the final comparisons;
\item planning residuals and the supplied value of $\bar H$;
\item all points in every sweep, including null and negative results.
\end{enumerate}

\section{Appendix F: Restored Broader Context and Constant-Ledger Interpretation}
\label{app:restoredcontext}

\subsection{F.1 Broader average-reward and reinforcement-learning context}
The undiscounted formulation traces to dynamic programming
\citep{bellman1957dynamic,howard1960dynamic,schweitzer1979geometric,bertsekas2012dynamic}
and optimal adaptive control \citep{burnetas1997optimal,tewari2008optimistic}. Beyond model-based
optimism, average-reward regret has been studied through posterior sampling
\citep{osband2017why,agrawal2017posterior,ouyang2017learning}, model-free methods
\citep{wei2020modelfree,zhang2023sharper}, policy optimization
\citep{abbasi2019politex,lazic2021improved}, Markov-chain concentration
\citep{ortner2020regret}, and function approximation
\citep{wei2021learning,chen2022learning}. The distribution-norm view of instance hardness
\citep{maillard2014hard} is complementary to our separation of statistical and geometric constants.

The episodic and discounted settings provide related tools, including minimax sample-complexity and
regret analyses \citep{azar2013minimax,azar2017minimax}, PAC and gap-dependent refinements
\citep{dann2015sample,dann2017unifying,simchowitz2019nonasymptotic,zanette2019tighter}, efficient
Q-learning \citep{jin2018qlearning}, reference-advantage decompositions
\citep{zhang2020almost,zhang2021reinforcement}, feature-based discounted bounds
\citep{zhou2021provably}, discounted PAC guarantees \citep{lattimore2012pac}, constrained
exploration \citep{efroni2020exploration}, and general model-based complexity measures
\citep{pmlr-v32-osband14}. The lower proof also draws on exact testing tools associated with Assouad and
Bretagnolle--Huber \citep{assouad1983deux,bretagnolle1979estimation}.

\subsection{F.2 Where the slack lives}
The revised analysis separates two questions that were previously conflated: slack inside an upper bound
and the gap between an upper and a lower certificate.

\paragraph{Bonus shape.}
A Bernstein radius replaces a worst-case range contribution by an empirical directional variance. Its
potential benefit is quantified by the difference between $C_p$ in
Assumption~\ref{lem:variancebudget} and the corresponding Hoeffding ledger. The square root in one local
radius is not itself the final regret coefficient
\citep{maurer2009empirical,audibert2009exploration}.

\paragraph{Confidence geometry.}
Full $\ell_1$ transition sets may pay for directions irrelevant to the selected bias. A directional or
structured-support set can reduce this cost, but support information must be known or statistically
certified. Restricting the model to successors observed so far can remove the true MDP and destroy
optimism. The relevant contribution is therefore the difference between two valid confidence ledgers,
not the number of currently observed successors.

\paragraph{Span width.}
If the upper square-root term scales as $\sqrt{\bar H S A T}$, replacing a certified width $\bar H$ by
the diameter changes that term by $\sqrt{\diam/\bar H}$, not $\diam/\bar H$. Lower-order terms linear in
the width can incur the larger ratio $\diam/\bar H$. Experiments must distinguish these two regimes.

\paragraph{Lower-bound balance.}
The testing and perturbation balance determines $c_{\rm LB}$. This is not a span-truncation factor and
should not be included under that name. If the reciprocal lower coefficient dominates the eventual
upper-to-lower ratio, it is the dominant sandwich slack even when span width is the dominant source of
looseness within a diameter-based upper bound.

This taxonomy yields four separately reportable quantities: the Bernstein-versus-Hoeffding ledger
ratio, the valid-confidence-geometry ratio, the width penalty, and the reciprocal lower certificate.
Their product is not called an exact decomposition unless the proof shows that the factors multiply under
one common normalization.

\subsection{F.3 Restored interpretation, limitations, and next steps}
\paragraph{What is certified.}
The lower result is a finite testing certificate tied to an explicit kernel. It includes the exact action
budget, diameter upper bound, flow identity, navigation cost, bias span, and path divergence. On the
regimes in Table~\ref{tab:frontier}, its certified coefficient ranges from $0.0152$ to $0.0291$; the same
analytic envelope proves every row. The upper contribution is an audit template, not a new regret
theorem. Its ledger makes an incorrect small coefficient difficult to report because every omitted factor
remains visibly open.

\paragraph{What is not claimed.}
We do not call the result constant-sharp while a nonconstant logarithmic mismatch remains. We also do
not claim that a zero-span MDP is cost-free, that empirical support is the true support, or that a
coordinatewise clip is a sound replacement for a span-constrained planning operator. The supplied span
bound is side information; replacing it by an estimator is a separate contribution whose error and
constant must appear in the theorem.

\paragraph{Interpreting span truncation.}
Span width can be the dominant source of looseness within an upper-bound analysis when $\Hcomp\ll\diam$,
even if the reciprocal lower-bound coefficient is numerically the dominant factor in an upper-to-lower
sandwich. These statements concern different comparisons. The fixed-diameter sweep tests the first; the
constants scoreboard addresses the second. We do not report an empirical lower--upper sandwich while
the upper ledger is incomplete.

\paragraph{Limitations and next steps.}
The present theory is tabular, and the strongest finite coefficients use action-rich regimes. The broad
$A\ge5$ row still requires conservative diameter and horizon thresholds, while reducing those thresholds
trades away part of the improvement. Directional confidence over an adaptive bias region is more
demanding than a pointwise Bernstein inequality, and a certified planner is more demanding than ordinary
value iteration. The natural next steps are to improve the frontier for small $A$, obtain a prior-free span
procedure with tracked constants, and prove a log-free expected upper analysis under the same
normalization as the lower certificate.

\end{document}